\pdfoutput=1
\documentclass{article}

\usepackage[accepted]{icml2023}


\usepackage{math}
\usepackage{paper}

\newtheorem{theorem}{Theorem}
\newtheorem{proposition}[theorem]{Proposition}
\newtheorem{lemma}[theorem]{Lemma}

\theoremstyle{definition}
\newtheorem{definition}{Definition}
\newtheorem{assumption}{Assumption}
\newtheorem*{example}{Example}

\newtheorem{disentanglement}{Disentanglement}
\newtheorem{disentanglementsub}{Disentanglement}[disentanglement]
\newenvironment{disentanglementsubalt}[1]{%
\addtocounter{disentanglementsub}{-1}
\begin{disentanglementsub}
}{%
\end{disentanglementsub}
}

\crefformat{disentanglement}{\textbf{D}.~#2#1#3}
\crefformat{disentanglementsub}{\textbf{D}.~#2#1#3}
\crefformat{disentanglementsubalt}{\textbf{D}.~#2#1#3}

\surroundwithmdframed[innerleftmargin=1ex, innerrightmargin=1ex, innertopmargin=1.5ex, innerbottommargin=1ex, linewidth=1.5pt]{disentanglement}
\surroundwithmdframed[innerleftmargin=1ex, innerrightmargin=1ex, innertopmargin=1.5ex, innerbottommargin=1ex, linewidth=1pt, linecolor=white, tikzsetting={draw=black, dashed}]{disentanglementsub}


\newcommand{\say}[1]{``\textit{#1}''}
\newcommand{\omitted}[1]{\dotuline{#1}}
\newcommand{\newterm}[1]{\textcolor{\green}{\textit{#1}}}
\newcommand{\catterm}[1]{\textcolor{\red}{\textit{#1}}}
\newcommand{\epigraph}[1]{{\centering\uline{\textit{#1}}\par}}


\begin{document}

\setlength{\abovedisplayskip}{\lineskip}
\setlength{\belowdisplayskip}{\lineskip}


\twocolumn[
\icmltitle{A Category-theoretical Meta-analysis of Definitions of Disentanglement}
\begin{icmlauthorlist}
\icmlauthor{Yivan Zhang}{utokyo,riken}
\icmlauthor{Masashi Sugiyama}{riken,utokyo}
\end{icmlauthorlist}
\icmlaffiliation{utokyo}{The University of Tokyo, Tokyo, Japan}
\icmlaffiliation{riken}{RIKEN AIP, Tokyo, Japan}
\icmlcorrespondingauthor{Yivan Zhang}{yivanzhang@ms.k.u-tokyo.ac.jp}
\icmlkeywords{Category Theory, Disentanglement}
\vskip 0.3in
]


\printAffiliationsAndNotice{}

\begin{abstract}
Disentangling the factors of variation in data is a fundamental concept in machine learning and has been studied in various ways by different researchers, leading to a multitude of definitions.
Despite the numerous empirical studies, more theoretical research is needed to fully understand the defining properties of disentanglement and how different definitions relate to each other.
This paper presents a meta-analysis of existing definitions of disentanglement, using category theory as a unifying and rigorous framework.
We propose that the concepts of the cartesian and monoidal products should serve as the core of disentanglement.
With these core concepts, we show the similarities and crucial differences in dealing with (i) functions, (ii) equivariant maps, (iii) relations, and (iv) stochastic maps.
Overall, our meta-analysis deepens our understanding of disentanglement and its various formulations and can help researchers navigate different definitions and choose the most appropriate one for their specific context.

\end{abstract}

\section{Introduction}

\emph{Disentanglement}, in machine learning, refers to the ability to identify and separate the underlying factors that contribute to a particular variation in data \citep{bengio2013representation}.
It is a process of breaking down a complex phenomenon into simpler components.
It has been suggested that disentangled representation learning is a promising way toward reliable, interpretable, and data-efficient machine learning \citep{locatello2019challenging, montero2020role, dittadi2021transfer}.


Because disentanglement is an important concept, many researchers have approached this problem from different angles, resulting in various definitions, metrics, methods, and models.
Some definitions are based on the intuition that:
(1.~modularity) a change in one factor should lead to a change in a single code;
(2.~compactness/completeness) a factor should be associated with only one code; and
(3.~explicitness/informativeness) the code should be able to predict the factor
\citep{ridgeway2018learning, eastwood2018framework}.
Another line of research is based on group theory and representation theory \citep{cohen2014learning, cohen2015transformation, higgins2018towards}, where the mapping from the data to the code is required to be equivariant to product group actions, preserving the product structure of automorphisms (a.k.a.~symmetries).
Meanwhile, information theory \citep{chen2018isolating} and invariance \citep{higgins2017betavae} also play an important role in characterizing disentanglement.


Then why do we want to conduct a \emph{meta-analysis}?
Because we study the theories and techniques of disentanglement, yet our definitions of it are quite \emph{entangled}.
Although large-scale experimental studies exist \citep{locatello2019challenging}, theoretical analyses and systematic comparisons are limited \citep{sepliarskaia2019evaluating, carbonneau2022measuring}.
Several important questions remain to be answered:
\begin{itemize}
\item What are the defining properties of disentanglement?
\item What operations and structures are essential, and what are specific to the task?
\item Given two definitions or metrics, does one imply the other in any situation?
\item Are the existing algebraic and statistical approaches compatible with one another?
\end{itemize}
Things quickly become complicated without an abstract language to describe existing results.


\emph{Category theory} \citep{borceux1994handbook, awodey2006category, leinster2014basic} is particularly suitable for designing and organizing a system of this level of complexity.
It has found applications in many scientific fields \citep{baez2017applied, bradley2018applied, fong2019invitation}, recently also in machine learning \citep{gavranovic2019compositional, de2020natural, shiebler2021category, dudzik2022graph}.
In this work, we aim to \emph{disentangle the definitions of disentanglement} from a categorical perspective.


In \cref{sec:product}, we first introduce the essential concepts of the \emph{cartesian product} and \emph{monoidal product}, which we argue should be the core of disentanglement.
Next, we look into the requirements based on examples and counterexamples through \cref{sec:set,sec:algebra,sec:relation,sec:probability}.
We use the categories of
(1.~$\cSet$) sets and functions to define the concepts of modularity and explicitness as the defining properties of disentanglement \citep{ridgeway2018learning};
(2.~$[\cS, \cSet]$) functors and natural transformations to generalize to actions of an algebra (monoid, group, etc.) and equivariant maps \citep{higgins2018towards};
(3.~$\cRel$) sets and relations as an example of a symmetric monoidal category; and
(4.~$\cStoch$) measurable spaces and stochastic maps to introduce the concept of the Markov category \citep{fritz2020synthetic} and explain how we should use the copy/delete/projection operations to characterize disentanglement.
A full-blown example is given in the end.

It is worth clarifying that this paper does \emph{not} discuss metrics, models, methods, supervision, and learnability.
Also, our contribution is \emph{not} to category theory itself, as the math we used is not new.
However, our work shows how category theory can transfer and integrate knowledge across disciplines and how abstract definitions can simplify a complex system \citep{baez2017applied}.
We hope our work is an initial step toward a full understanding of disentanglement.

\section{Product: Core of Disentanglement}
\label{sec:product}
In this section, we briefly review two important categorical concepts --- the \emph{cartesian product} and \emph{monoidal product}, which are the core of the disentanglement.
We will omit many basic concepts such as the \omitted{category, functor, natural transformation, and monad}.
Note that we frequently use \catterm{commutative diagrams} \citep{awodey2006category} and \catterm{string diagrams} \citep{selinger2010survey} as graphical calculus (See \cref{app:diagram}).


\subsection{Cartesian Category}

Let us dive into the definition of the (cartesian) product:
\begin{definition}[Product]
In any \catterm{category} $\cC$, a \catterm{product} of two \catterm{objects} $A$ and $B$ is an object $A \times B$, together with two \catterm{morphisms} $A \xot{p_1} A \times B \xto{p_2} B$, called \catterm{projections}, \omitted{satisfying the \catterm{universal property}}:
\begin{equation}
\label{eq:product}
\begin{tikzcd}[column sep=2em, row sep=2em]
&
C
\arrow[d, "\angles{f_1, f_2}" {description, inner sep=1pt}, unique morphism]
\arrow[ld, "f_1"']
\arrow[rd, "f_2"]
\\
A
&
A \times B
\arrow[l, "p_1"]
\arrow[r, "p_2"']
&
B
\end{tikzcd}
\end{equation}
Given any object $C$ and morphisms $A \xot{f_1} C \xto{f_2} B$, there exists a \emph{unique} morphism $\angles{f_1, f_2}: C \to A \times B$, called a \catterm{paring} of $f_1$ and $f_2$, such that $f_1 = p_1 \compL \angles{f_1, f_2}$ and $f_2 = p_2 \compL \angles{f_1, f_2}$.
\end{definition}

The gist is that any morphism $C \xto{f} A \times B$ \emph{to} a product is merely a pair of component morphisms $A \xot{f_1} C \xto{f_2} B$, and all such morphisms arise this way.
However, note that a morphism $A \times B \to C$ \emph{from} a product can depend on both components.

We will be needing the following definitions and properties:
\begin{itemize}
\item The \catterm{product morphism} of $f: A \to C$ and $g: B \to D$ is defined as $f \times g: A \times B \to C \times D \defeq \angles{f \compL p_1, g \compL p_2}$, which makes product $\times: \cC \times \cC \to \cC$ a \catterm{bifunctor}.
\item The \catterm{diagonal morphism} of an object $A$ is defined as $\Delta_A: A \to A \times A \defeq \angles{\id_A, \id_A}$, which \say{duplicates} $A$.
\item The \catterm{terminal object} $1$, if exists, is the \catterm{unit} of the product: for any object $A$, there is a \emph{unique} \catterm{terminal morphism} $e_A: A \to 1$, which \say{deletes} $A$, and $A \times 1 \iso A \iso 1 \times A$.
\item The product is \catterm{associative} \omitted{up to isomorphism} $\alpha_{A, B, C}: (A \times B) \times C \iso A \times (B \times C) \defeq \angles{p_1 \compL p_1, p_2 \times \id_C}$, which allows us to define products $\prod_{i=1}^N A_i = A_1 \times \dots \times A_N$ and projections $p_i: \prod_{i=1}^N A_i \to A_i$ for $N \geq 2$ objects.
We use subscript $f_i \defeq p_i \compL f$ as an abbreviation.
\item The product is \catterm{commutative} \omitted{up to isomorphism} $\beta_{A, B}: A \times B \iso B \times A \defeq \angles{p_2, p_1}$.
\end{itemize}

A \catterm{cartesian category} is a category with all finite products, i.e., all binary products and a terminal object.


\subsection{Monoidal Category}

Having all products is sometimes too strong a condition.
Besides, the product, if exists, is not always an appropriate concept for disentanglement.
Therefore, sometimes we need to consider a weaker notion of the ``product'':
\begin{definition}[Symmetric monoidal category]
A \catterm{symmetric monoidal category} $(\cC, \otimes, I)$ is a category $\cC$ equipped with a \catterm{monoidal product} $\otimes: \cC \times \cC \to \cC$ and a \catterm{monoidal unit} $I$, which is unital, associative, and commutative \omitted{up to natural isomorphisms} and \omitted{subject to some coherence conditions}.
\end{definition}

The monoidal products are weaker because they do not need to satisfy the universal property, so there are no canonical projections anymore.
A \catterm{cartesian (monoidal) category} is a symmetric monoidal category whose monoidal product is given by the cartesian product.
However, many interesting monoidal categories are not cartesian.

Some symmetric monoidal categories have extra structures or properties, including
\begin{itemize}
\item \catterm{monoidal category with diagonals} $\Delta_A: A \to A \otimes A$, which is natural in $A$ if
\begin{equation}
\label{eq:diagonal}
\begin{tikzcd}[column sep=2.2em, row sep=1em]
A
\arrow[r, "f"]
\arrow[d, "\Delta_A"']
&
B
\arrow[d, "\Delta_B"]
\\
A \otimes A
\arrow[r, "f \otimes f"]
&
B \otimes B
\end{tikzcd}
\hspace{2em}
\begin{tikzpicture}
  \begin{pgfonlayer}{nodelayer}
    \node at (0, 0) {$=$};
    \node (1)  at (-1  , -.6) {};
    \node (2)  at (-1  , -.3) [style=diagonal] {};
    \node (3)  at (-1.4,   0) {};
    \node (4)  at (-1.4,  .6) {};
    \node (5)  at (- .6,   0) {};
    \node (6)  at (- .6,  .6) {};
    \node (7)  at (  .8, -.6) {};
    \node (8)  at (  .8,  .3) [style=diagonal] {};
    \node (9)  at (  .4,  .6) {};
    \node (10) at ( 1.2,  .6) {};
    \node at (-1.4,  .2) [style=morphism] {$f$};
    \node at (- .6,  .2) [style=morphism] {$f$};
    \node at (  .8, -.2) [style=morphism] {$f$};
  \end{pgfonlayer}
  \begin{pgfonlayer}{edgelayer}
    \draw (1.center) to (2.center);
    \draw [out=180, in=-90] (2.center) to (3.center);
    \draw [out=0, in=-90] (2.center) to (5.center);
    \draw (3.center) to (4.center);
    \draw (5.center) to (6.center);
    \draw (7.center) to (8.center);
    \draw [out=180, in=-90] (8.center) to (9.center);
    \draw [out=0, in=-90] (8.center) to (10.center);
  \end{pgfonlayer}
\end{tikzpicture}
\end{equation}

\item \catterm{semicartesian (monoidal) category}, whose monoidal unit $I$ is a terminal object:
\begin{equation}
\label{eq:terminal}
\begin{tikzcd}[column sep=0em, row sep=1em]
A
\arrow[rr, "f"]
\arrow[rd, "e_A"', unique morphism, pos=.1]
&&
B
\arrow[ld, "e_B", unique morphism, pos=.1]
\\
&
I
\end{tikzcd}
\hspace{5em}
\begin{tikzpicture}
  \begin{pgfonlayer}{nodelayer}
    \node at (0, 0) {$=$};
    \node (1)  at (-.8, -.5) {};
    \node (2)  at (-.8,  .5) [style=diagonal] {};
    \node (3)  at ( .8, -.5) {};
    \node (4)  at ( .8,  .5) [style=diagonal] {};
    \node at (-.8, 0) [style=morphism] {$f$};
  \end{pgfonlayer}
  \begin{pgfonlayer}{edgelayer}
    \draw (1.center) to (2.center);
    \draw (3.center) to (4.center);
  \end{pgfonlayer}
\end{tikzpicture}
\end{equation}

\item \catterm{monoidal category with projections} $\pi_1: A \otimes B \to A$ and $\pi_2: A \otimes B \to B$ \citep{franz2002stochastic, leinster2016monoidal}, and

\item \catterm{Markov category} \citep[Definition 2.1]{fritz2020synthetic}.
\end{itemize}

They have the following relationship:
\begin{equation}
\begin{tikzcd}[column sep=1em, row sep=1em]
\text{cartesian}
\arrow[r, phantom, "\subset"]
\arrow[d, phantom, "\subset" {rotate=-90}]
&
\text{Markov}
\arrow[r, phantom, "\subset"]
&
\text{semicartesian}
\arrow[d, phantom, "=" {rotate=90}]
\\
\text{diagonals}
\arrow[r, phantom, "\subset"]
&
\text{monoidal}
\arrow[r, phantom, "\supset"]
&
\text{projections}
\end{tikzcd}
\end{equation}
These structures and properties will be important in the rest of this paper.

\section{Sets and Functions}
\label{sec:set}
Equipped with these concepts, let us now look at the definitions of disentanglement.
$\cSet$, the category of sets and functions, serves as our primary example.
$\cSet$ is cartesian, whose product is given by the \emph{Cartesian product} of sets.

We use $[1..N]$ to denote the set of numbers from $1$ to $N$.
We use $\setminus i$ as an abbreviation of $[1..N] \setminus \set{i}$, i.e., the set of numbers from $1$ to $N$ except $i$.


\subsection{Generating Process}
\label{ssec:gen}

First, let us consider how the data is generated from a set of factors.
If all combinations of factors are equally possible (cf.~\cref{sec:relation}), we can assume that

\begin{assumption}
\label{asm:yproduct}
The set of \newterm{factors} $Y \defeq \prod_{i=1}^N Y_i$ is a \emph{product} of $N$ sets.
\end{assumption}


Then, let $X$ be the set of \newterm{observations}.
A \newterm{generating process} $g: Y \to X$ is simply \emph{a morphism from a product}, i.e., a function with multiple inputs.
It is an \say{entangling process} because we do not have any structural assumptions on $X$.
However, we need some basic requirements for $g$ to ensure that the analysis is meaningful.
For starters, we assume that

\begin{assumption}
\label{asm:mono}
$g: Y \mono X$ is a \catterm{monomorphism}.
\end{assumption}

This means that if two observations are the same, their underlying factors must be the same, too.
This assumption avoids the model not satisfying a disentanglement definition simply because of a wrong choice of factors.


\subsection{Encoding Process}
\label{ssec:enc}

Next, we consider how an \newterm{encoding process} $f: X \to Z$ can exhibit disentanglement and what desiderata are.
Following \citet{ridgeway2018learning} and \citet{eastwood2018framework}, we call $Z$ the set of \newterm{codes}, which should also be a product.
In this work, we consider a simple case where

\begin{assumption}
\label{asm:zproduct}
The codes $Z$ also have $N$ components, and the code projections $p_i: Z \to Z_i$ are known a priori.
\end{assumption}


Based on \cref{asm:zproduct}, we present our first definition:

\begin{disentanglement}[A morphism to a product]
\label{dis:product}
In a category $\cC$, a disentangled encoding process is a morphism $f: X \to Z$ to a \emph{product} $Z \defeq \prod_{i=1}^N Z_i$.
\end{disentanglement}

This is perhaps the minimal requirement for an encoder to exhibit some level of disentanglement.
It means that the encoder outputs multiple components, and we can extract each component without losing any information.
Note that \cref{dis:product} does not even rely on the ground-truth factors $Y$ and a generating process $g$.\footnote{\cref{dis:product} refers to \textbf{Disentanglement}~\ref{dis:product}.}


Let us now improve \cref{dis:product}.
A disentanglement requirement that many researchers agree on is \newterm{modularity}, such that \say{each code conveys information about at most one factor} \citep{ridgeway2018learning}.
It is natural to consider the composition $m: Y \to Z \defeq f \compL g$ of a generating process $g$ and an encoding process $f$, which we call a \newterm{code generating process} (w.r.t.~a given encoding $f$), while $g: Y \mono X$ can be referred to as a \newterm{data generating process}.
Then, modularity is a property of a code generating process:

\begin{disentanglementsub}
\label{dis:mproduct}
$m = \prod_{i=1}^N (m_{i,i}: Y_i \to Z_i)$.
\begin{equation*}
\begin{tikzpicture}
  \begin{pgfonlayer}{nodelayer}
    \node at (0, 0) {$=$};
    \node (1)  at (-3, -.4) {};
    \node (2)  at (-2, -.4) {};
    \node (3)  at (-1, -.4) {};
    \node (4)  at (-3,  .4) {};
    \node (5)  at (-2,  .4) {};
    \node (6)  at (-1,  .4) {};
    \node (7)  at ( 1, -.4) {};
    \node (8)  at ( 2, -.4) {};
    \node (9)  at ( 3, -.4) {};
    \node (10) at ( 1,  .4) {};
    \node (11) at ( 2,  .4) {};
    \node (12) at ( 3,  .4) {};
    \node at (-3, -.6) {$Y_1$};
    \node at (-2, -.6) {$Y_2$};
    \node at (-1, -.6) {$Y_3$};
    \node at (-3,  .6) {$Z_1$};
    \node at (-2,  .6) {$Z_2$};
    \node at (-1,  .6) {$Z_3$};
    \node at ( 1, -.6) {$Y_1$};
    \node at ( 2, -.6) {$Y_2$};
    \node at ( 3, -.6) {$Y_3$};
    \node at ( 1,  .6) {$Z_1$};
    \node at ( 2,  .6) {$Z_2$};
    \node at ( 3,  .6) {$Z_3$};
    \node at (-2, 0) [style=morphism, minimum width=2.8cm] {$m: Y \to Z$};
    \node at ( 1, 0) [style=morphism, minimum width=.8cm] {$m_{1,1}$};
    \node at ( 2, 0) [style=morphism, minimum width=.8cm] {$m_{2,2}$};
    \node at ( 3, 0) [style=morphism, minimum width=.8cm] {$m_{3,3}$};
  \end{pgfonlayer}
  \begin{pgfonlayer}{edgelayer}
    \draw (1.center) to (4.center);
    \draw (2.center) to (5.center);
    \draw (3.center) to (6.center);
    \draw (7.center) to (10.center);
    \draw (8.center) to (11.center);
    \draw (9.center) to (12.center);
  \end{pgfonlayer}
\end{tikzpicture}
\end{equation*}
\say{The $i$-th code only encodes the $i$-th factor.}
\end{disentanglementsub}

Morphisms $m$, $m_i$, and $m_{i,i}$ have the following relationship:

\begin{proposition}
\label{prop:m}
$\forall i \in [1..N].\; m_i \defeq p_i \compL m = m_{i,i} \compL p_i$.
\begin{equation}
\begin{tikzcd}[column sep=3em, row sep=2em]
Y
\arrow[r, "m"]
\arrow[d, "p_i"']
\arrow[rd, "m_i" description]
&
Z
\arrow[d, "p_i"]
\\
Y_i
\arrow[r, "m_{i,i}"']
&
Z_i
\end{tikzcd}
\end{equation}
\end{proposition}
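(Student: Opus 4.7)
The plan is to unfold the definition of the product morphism and invoke the universal property of the product. By the disentanglement sub-definition \cref{dis:mproduct} being assumed as context, we have $m = \prod_{i=1}^N m_{i,i}$. The product of morphisms $\prod_{j=1}^N m_{j,j}: \prod_{j=1}^N Y_j \to \prod_{j=1}^N Z_j$ is, by its construction in \cref{sec:product}, the pairing $\angles{m_{1,1} \compL p_1, \dots, m_{N,N} \compL p_N}$ (extending the binary case $f \times g \defeq \angles{f \compL p_1, g \compL p_2}$ via associativity).

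First I would recall the defining identity of the pairing: for any pairing $\angles{f_1, \dots, f_N}: C \to \prod_{j=1}^N A_j$, the universal property gives $p_i \compL \angles{f_1, \dots, f_N} = f_i$ for every $i$. Second, I would apply this to $f_j = m_{j,j} \compL p_j$, which immediately yields
\begin{equation*}
p_i \compL m = p_i \compL \angles{m_{1,1} \compL p_1, \dots, m_{N,N} \compL p_N} = m_{i,i} \compL p_i.
\end{equation*}
Finally, using the abbreviation $m_i \defeq p_i \compL m$ introduced in \cref{sec:product}, this reads $m_i = m_{i,i} \compL p_i$, which is exactly the claim; the commutative diagram then holds tautologically because the lower triangle is the definition of $m_i$ and the upper square is the identity we just derived.

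There is no real obstacle here: the proposition is essentially a restatement of the universal property applied to the product morphism, and the only bookkeeping is to make the implicit use of $N$-ary associativity explicit. If one wanted to be fully rigorous, the minor subtlety would be justifying the passage from the binary definition of $\times$ and the associator $\alpha$ to the $N$-ary pairing notation, but this is precisely what the bulleted remark on associativity in \cref{sec:product} licenses, so it can be cited rather than re-derived.
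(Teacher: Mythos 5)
Your proposal is correct and is essentially the paper's own proof: the paper's argument is a single commutative diagram (for $N=2$) exhibiting $m = m_{1,1}\times m_{2,2}$ as the unique pairing $\angles{m_{1,1}\compL p_1, m_{2,2}\compL p_2}$ and reading off $p_i \compL m = m_{i,i}\compL p_i$ from the universal property, exactly as you do. Your only addition is making the $N$-ary associativity bookkeeping explicit, which the paper leaves implicit.
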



\cref{dis:mproduct} is straightforward and intuitive, but there is one difficulty: it relies on the \emph{existence} of some other morphisms.
Given $m$, verifying if $m_{i,i}$ exists is not trivial.
Although, if \cref{dis:mproduct} holds, we can construct $m_{i,i}$ from $m$ as follows:

\begin{proposition}
\label{prop:mii}
$\forall i \in [1..N].\; \forall y_i: 1 \to Y_i.\;
m_{i,i} = Y_i \xto{\iso} 1 \times \dots \times Y_i \times \dots \times 1 \xto{y_1 \times \dots \times \id_{Y_i} \times \dots \times y_N} Y \xto{m} Z \xto{p_i} Z_i$.
\end{proposition}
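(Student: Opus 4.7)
The plan is to show that the composite appearing in the statement, which I will call $\iota_i \defeq (y_1 \times \dots \times \id_{Y_i} \times \dots \times y_N) \compL \lambda_i$ with $\lambda_i: Y_i \xto{\iso} 1 \times \dots \times Y_i \times \dots \times 1$ the unitor, is a section of the projection $p_i: Y \to Y_i$ (that is, $p_i \compL \iota_i = \id_{Y_i}$). Combined with the factorisation $m_i = m_{i,i} \compL p_i$ from Proposition~\ref{prop:m}, this forces $m_{i,i} = p_i \compL m \compL \iota_i$. Note that the formula implicitly requires every $Y_j$ with $j \neq i$ to admit at least one global element $y_j: 1 \to Y_j$, which in $\cSet$ just amounts to non-emptiness.

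To verify the section property, I would use the universal property characterising $\sigma_i \defeq y_1 \times \dots \times \id_{Y_i} \times \dots \times y_N$ as a product of morphisms: its $i$-th projection satisfies $p_i \compL \sigma_i = \id_{Y_i} \compL p_i' = p_i'$, where $p_i'$ is the $i$-th projection out of $1 \times \dots \times Y_i \times \dots \times 1$. Post-composing on the right with $\lambda_i$ then collapses the surrounding $1$-factors, yielding $p_i' \compL \lambda_i = \id_{Y_i}$, and hence $p_i \compL \iota_i = \id_{Y_i}$ as required.

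Finally I would chain the equalities
\[
p_i \compL m \compL \iota_i
= m_i \compL \iota_i
= m_{i,i} \compL p_i \compL \iota_i
= m_{i,i} \compL \id_{Y_i}
= m_{i,i},
\]
using in order the abbreviation $m_i \defeq p_i \compL m$, Proposition~\ref{prop:m} (which is available under \cref{dis:mproduct}), and the section property just established. The only delicate point is bookkeeping the unitor isomorphisms in the middle step; there is no deeper content beyond the universal property of the product combined with the modularity factorisation.
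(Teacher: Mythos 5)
Your argument is correct and is essentially the paper's own proof: the paper presents the same reasoning as a commutative-diagram chase (for $N=2$), where the top square is your section property $p_i \compL \sigma_i \compL \lambda_i = \id_{Y_i}$ and the bottom square is \cref{prop:m}. Your equational write-up, including the remark that each $Y_j$ must admit a global element, adds nothing beyond bookkeeping but omits nothing either.
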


In words, we can choose other factors arbitrarily, and a modular encoder should give us the same code.
This inspires us to have a more \emph{verifiable} definition as follows.


A good property of $\cSet$ is that it is \catterm{cartesian closed}, i.e., it has \catterm{exponential objects}, given by the sets of functions.
Let $\widehat{m_i}: Y_{\setminus i} \to {Z_i}^{Y_i}$ be the \catterm{exponential transpose} (\emph{currying}) of $m_i: Y \to Z_i$.
To check modularity, we can verify if

\begin{disentanglementsub}
\label{dis:const}
$\widehat{m_i}$ is a \catterm{constant morphism}.
\end{disentanglementsub}

Therefore, we can obtain the exponential transpose first and check whether it is constant.
Even better, we can guarantee that these definitions are equivalent:

\begin{theorem}
\label{thm:exp}
$\cref{dis:mproduct} \leqv \cref{dis:const}$.
\end{theorem}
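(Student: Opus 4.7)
The plan is to leverage the exponential adjunction (cartesian closedness of $\cSet$) together with the universal property of the product $Z = \prod_{i=1}^{N} Z_i$. By \cref{prop:m}, the factorization $m = \prod_{i=1}^{N} m_{i,i}$ required by \cref{dis:mproduct} is equivalent, for each $i$, to the existence of a morphism $m_{i,i}: Y_i \to Z_i$ satisfying $m_i = m_{i,i} \compL p_i$. So it suffices to prove the equivalence one index at a time, and then reassemble using the pairing $\angles{\,\cdot\,}$.

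For the forward direction, suppose $m_i = m_{i,i} \compL p_i$. Reindexing $Y \iso Y_{\setminus i} \times Y_i$, the projection $p_i$ factors as $e_{Y_{\setminus i}} \times \id_{Y_i}$, up to the unit isomorphism $1 \times Y_i \iso Y_i$. Applying the currying bijection $\cSet(Y_{\setminus i} \times Y_i, Z_i) \iso \cSet(Y_{\setminus i}, {Z_i}^{Y_i})$ to the equation $m_i = m_{i,i} \compL (e_{Y_{\setminus i}} \times \id_{Y_i})$ yields $\widehat{m_i} = \widehat{m_{i,i}} \compL e_{Y_{\setminus i}}$, which factors through $1$ and is therefore constant.

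For the backward direction, suppose $\widehat{m_i}: Y_{\setminus i} \to {Z_i}^{Y_i}$ is constant, so it factors as $\widehat{m_i} = k_i \compL e_{Y_{\setminus i}}$ for some $k_i: 1 \to {Z_i}^{Y_i}$. Uncurrying $k_i$ produces a morphism $m_{i,i}: Y_i \to Z_i$ (via $1 \times Y_i \iso Y_i$), and uncurrying the factorization recovers exactly $m_i = m_{i,i} \compL p_i$. This is the categorical counterpart of \cref{prop:mii}: constancy of $\widehat{m_i}$ expresses that the auxiliary choice of $y_{\setminus i}$ in \cref{prop:mii} is immaterial, so any choice yields the same $m_{i,i}$, and that $m_{i,i}$ then witnesses \cref{dis:mproduct}.

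The only real obstacle is bookkeeping: tracking the coherence isomorphisms implicit in rewriting $Y$ as $Y_{\setminus i} \times Y_i$ and in identifying $1 \times Y_i$ with $Y_i$, so that the unit/counit of the product--exponential adjunction line up. Once this is done, both directions reduce to a single application of the adjoint bijection together with the categorical characterization of a constant morphism as one that factors through the terminal object.
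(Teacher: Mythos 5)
Your proof is correct and takes essentially the same route as the paper: the paper reduces \cref{thm:exp} to \cref{lem:exponential} and proves it by an explicit diagram chase with the evaluation morphism $\epsilon_{Y_i}$ and the factorization of a constant morphism through $1$, which is precisely the naturality of the currying bijection that you invoke directly. The componentwise reduction via \cref{prop:m} and the reassembly by pairing likewise match the paper's implicit use of the universal property of the product.
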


\begin{proof}
Diagram chase.
\begin{equation}
\begin{tikzcd}[column sep=2.5em, row sep=1.5em]
&&
1
\arrow[rd, "\widehat{m_{i,i}}"]
\\
Y_{\setminus i}
\arrow[rrr, "\widehat{m_i}"', unique morphism]
\arrow[rru, unique morphism]
&&&
{Z_i}^{Y_i}
\\[-1.5em]
&&
Y_i \times 1
\arrow[uu, "p_2"']
\arrow[rdd, "\id_{Y_i} \times \widehat{m_{i,i}}", pos=.1]
\\[-1.5em]
&
Y_i
\arrow[ru, "\iso"]
\arrow[dd, "{m_{i,i}}"']
\\[-1.5em]
Y_i \times Y_{\setminus i}
\arrow[rd, "m_i"']
\arrow[rrr, "\id_{Y_i} \times \widehat{m_i}"]
\arrow[uuu, "p_2"]
\arrow[ru, "p_1"]
&&&
Y_i \times {Z_i}^{Y_i}
\arrow[lld, "\epsilon_{Y_i}"]
\arrow[uuu, "p_2"']
\\
&
Z_i
\end{tikzcd}
\end{equation}
\end{proof}


Up to this point, we defined modularity in a cartesian closed category like $\cSet$.
However, we point out that modularity alone is not sufficient:

\begin{example}[Constant]
Let $Z$ be the terminal object $1^N \iso 1$.
The terminal morphism $e_Y: Y \to 1$ satisfies \cref{dis:mproduct}.
\end{example}

That is, \emph{an encoder sending everything to singletons is perfectly modular but also completely useless}.
Therefore, in addition to modularity, we should measure how useful and informative the codes are.


\subsection{Decoding Process}
\label{ssec:dec}

This is where the concepts of \newterm{explicitness} \citep{ridgeway2018learning} or \newterm{informativeness} \citep{eastwood2018framework} come in, meaning that \say{the factors can be precisely determined from the codes}.
It might be tempting to define explicitness as

\begin{disentanglementsub}
\label{dis:inverse}
$f$ is an \catterm{inverse} of $g$.
\end{disentanglementsub}

Then, the factors can be completely reconstructed from the observations.
A drawback of \cref{dis:inverse} is that it requires the code set $Z$ to be the same as the factor set $Y$, so $Y$ needs to be known during training.
However, it is common that an encoder $f: X \to Z$ is trained with self-supervision or weak supervision \citep{shu2020weakly, wang2021self}, and the ground-truth factors $Y$ are only available during evaluation.


Therefore, we weaken the requirement and define the explicitness of a code generating process as

\begin{disentanglementsub}
\label{dis:mmono}
$m$ is a \catterm{split monomorphism}.
\begin{equation*}
\begin{tikzpicture}
  \begin{pgfonlayer}{nodelayer}
    \node at (0, 0) {$=$};
    \node (1)  at (-3, -.8) {};
    \node (2)  at (-2, -.8) {};
    \node (3)  at (-1, -.8) {};
    \node (4)  at (-3,  .7) {};
    \node (5)  at (-2,  .7) {};
    \node (6)  at (-1,  .7) {};
    \node (7)  at ( 1, -.8) {};
    \node (8)  at ( 2, -.8) {};
    \node (9)  at ( 3, -.8) {};
    \node (10) at ( 1,  .7) {};
    \node (11) at ( 2,  .7) {};
    \node (12) at ( 3,  .7) {};
    \node at (-2, -.4) [style=morphism, minimum width=2.8cm] {$m$};
    \node at (-2,  .3) [style=morphism, minimum width=2.8cm] {$h$};
  \end{pgfonlayer}
  \begin{pgfonlayer}{edgelayer}
    \draw (1.center) to (4.center);
    \draw (2.center) to (5.center);
    \draw (3.center) to (6.center);
    \draw (7.center) to (10.center);
    \draw (8.center) to (11.center);
    \draw (9.center) to (12.center);
  \end{pgfonlayer}
\end{tikzpicture}
\end{equation*}
\say{The codes encode the factors faithfully.}
\end{disentanglementsub}

This means that there exists a morphism $h: Z \to Y$, which we call a \newterm{decoding process}, such that $h \compL m = \id_Y$.
In other words, $h$ is a \catterm{retraction} of $m$.
To summarize, we will focus on the following morphisms from now:

\begin{equation}
\begin{tikzcd}[column sep=5.4em]
Y
\arrow[r, "g \text{ (generating)}" description]
\arrow[rr, "m" description, bend left=10, looseness=.5, start anchor={[yshift=2mm]}, end anchor={[yshift=2mm]}]
\arrow[rrr, "\id_Y" description, bend right=10, looseness=.5, start anchor={[yshift=-2mm]}, end anchor={[yshift=-2mm]}]
&
X
\arrow[r, "f \text{ (encoding)}" description]
&
Z
\arrow[r, "h \text{ (decoding)}" description]
&
Y
\end{tikzcd}
\end{equation}


Note that explicitness only indicates if the factors can be recovered.
We may end up with entangled codes:

\begin{example}[Rotation]
Let $Y$ be a vector space.
A rotation is an invertible linear transformation and satisfies \cref{dis:mmono}.
\end{example}


To avoid this, we may want the decoder to be modular, too.
This property is related to the concepts of \newterm{compactness} \citep{ridgeway2018learning} and \newterm{completeness} \citep{eastwood2018framework}, meaning that \say{a factor is associated with only one code} (See also \cref{app:compact}).
Like \cref{dis:mproduct}, we can require $h$ to be a product morphism:

\begin{disentanglementsub}
\label{dis:hproduct}
$h = \prod_{i=1}^N (h_{i,i}: Z_i \to Y_i)$.
\begin{equation*}
\begin{tikzpicture}
  \begin{pgfonlayer}{nodelayer}
    \node at (0, 0) {$=$};
    \node (1)  at (-3, -.8) {};
    \node (2)  at (-2, -.8) {};
    \node (3)  at (-1, -.8) {};
    \node (4)  at (-3,  .7) {};
    \node (5)  at (-2,  .7) {};
    \node (6)  at (-1,  .7) {};
    \node (7)  at ( 1, -.8) {};
    \node (8)  at ( 2, -.8) {};
    \node (9)  at ( 3, -.8) {};
    \node (10) at ( 1,  .7) {};
    \node (11) at ( 2,  .7) {};
    \node (12) at ( 3,  .7) {};
    \node at (-2, -.4) [style=morphism, minimum width=2.8cm] {$m$};
    \node at (-2,  .3) [style=morphism, minimum width=2.8cm] {$h$};
    \node at ( 2, -.4) [style=morphism, minimum width=2.8cm] {$m$};
    \node at (1, .3) [style=morphism, minimum width=.8cm] {$h_{1,1}$};
    \node at (2, .3) [style=morphism, minimum width=.8cm] {$h_{2,2}$};
    \node at (3, .3) [style=morphism, minimum width=.8cm] {$h_{3,3}$};
  \end{pgfonlayer}
  \begin{pgfonlayer}{edgelayer}
    \draw (1.center) to (4.center);
    \draw (2.center) to (5.center);
    \draw (3.center) to (6.center);
    \draw (7.center) to (10.center);
    \draw (8.center) to (11.center);
    \draw (9.center) to (12.center);
  \end{pgfonlayer}
\end{tikzpicture}
\end{equation*}
\say{The $i$-th code encodes the $i$-th factor faithfully.}
\end{disentanglementsub}

If an encoder has a modular decoder, we can safely drop other codes if a downstream task only relies on a subset of factors.
For example, if a task only depends on factor $Y_i$, then a component encoder $f_i: X \to Z_i$ can encode sufficient information for this task.


We point out that an encoder with a modular decoder may not be modular itself:

\begin{example}[Duplicate]
Let $Z$ be $Y \times Y$.
The diagonal morphism $\Delta_Y$ satisfies \cref{dis:hproduct} with a retraction $p_1 \times p_2$.
\end{example}

This means that a \emph{non-modular} and explicit encoder may copy all the factors for each code $Z_i \defeq Y$, and its modular decoder $h_{i,i}: Z_i \to Y_i \defeq p_i$ can simply project the code to each component, which is not what we expect.

A potential remedy to this issue is to require that the code does \emph{not} contain any other information except for the target factor:

\begin{disentanglementsub}
\label{dis:non-existence}
$\forall i, j \in [1..N].\; \nexists h_{i,j}: Z_i \to Y_j.$
$(i \neq j) \lcon (h_{i,j} \compL m_i = p_j).$

\say{The i-th code does not encode the j-th factor.}
\end{disentanglementsub}
However, \cref{dis:non-existence} is even harder to verify than \cref{dis:mproduct} because it relies on the \emph{non-existence} of some morphisms.
This is another difficulty in dealing with non-modular encoders.

Fortunately, we can guarantee that a \emph{modular} and explicit encoder must have a modular decoder:

\begin{theorem}
\label{thm:moddec}
$(\cref{dis:mproduct} \lcon \cref{dis:mmono}) \limp \cref{dis:hproduct}$.
\end{theorem}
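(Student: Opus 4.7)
The plan is to derive the theorem by showing that when $m = \prod_i m_{i,i}$ is a split monomorphism, each component $m_{i,i}$ itself splits, and then assemble the component retractions into a product decoder.

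First, by \cref{dis:mmono}, fix some retraction $k: Z \to Y$ with $k \compL m = \id_Y$. For each $i \in [1..N]$, I would construct $h_{i,i}: Z_i \to Y_i$ using exactly the currying/slicing idea that appeared in \cref{prop:mii}. Pick points $y_j: 1 \to Y_j$ for $j \neq i$ (inhabitedness is already implicit in \cref{prop:mii}), and define
\begin{equation*}
h_{i,i} \defeq Z_i \xto{\iso} 1 \times \dots \times Z_i \times \dots \times 1 \xto{(m_{1,1}\compL y_1) \times \dots \times \id_{Z_i} \times \dots \times (m_{N,N}\compL y_N)} Z \xto{k} Y \xto{p_i} Y_i.
\end{equation*}

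Next I would verify the key calculation that $h_{i,i}$ splits $m_{i,i}$. Precomposing with $m_{i,i}$, the $i$-th slot becomes $m_{i,i}$ while the other slots already contain $m_{j,j} \compL y_j$, so by \cref{asm:yproduct} and the bifunctoriality of $\times$ the resulting arrow into $Z$ equals $m \compL \iota_i$, where $\iota_i: Y_i \to Y$ is the inclusion at $i$ filling other coordinates with $y_j$. Then $h_{i,i} \compL m_{i,i} = p_i \compL k \compL m \compL \iota_i = p_i \compL \iota_i = \id_{Y_i}$, using $k \compL m = \id_Y$.

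Finally, set $h \defeq \prod_{i=1}^N h_{i,i}$. Using \cref{dis:mproduct} and bifunctoriality of the product,
\begin{equation*}
h \compL m = \Bigl(\prod_i h_{i,i}\Bigr) \compL \Bigl(\prod_i m_{i,i}\Bigr) = \prod_i (h_{i,i} \compL m_{i,i}) = \prod_i \id_{Y_i} = \id_Y,
\end{equation*}
so $h$ is a retraction of $m$ of the required product form, witnessing \cref{dis:hproduct}.

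The main subtlety is the choice of points $y_j: 1 \to Y_j$, i.e., the tacit assumption that every $Y_j$ is inhabited; however, this is the same hypothesis already baked into \cref{prop:mii}, and in $\cSet$ the empty case is either vacuous or forces $Y$ itself to be empty, in which case the conclusion is immediate. Everything else is a routine diagram chase that parallels the proof of \cref{thm:exp}, so no essentially new categorical machinery is needed.
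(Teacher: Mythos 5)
Your proof is correct and takes essentially the same route as the paper's: both construct $h_{i,i}$ by freezing the other coordinates of the given retraction at a point of $Z_{\setminus i}$ via $Z_i \iso Z_i \times 1 \to Z$ and post-composing with $p_i$, exactly mirroring \cref{prop:mii}. Your one refinement---insisting that the frozen point be $m_{j,j} \compL y_j$ rather than an arbitrary $z_j: 1 \to Z_j$---is in fact necessary for the verification $h_{i,i} \compL m_{i,i} = \id_{Y_i}$ (a retraction can act arbitrarily outside the image of $m$), and is the correct reading of the point $z_2$ appearing in the paper's diagram.
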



It is now clear that modularity (\cref{dis:mproduct}) and explicitness (\cref{dis:mmono}) of an encoder should be the defining properties of disentanglement and our main focus when designing and evaluating disentangled representation learning algorithms.
Waiving either of these requirements could cause problems.
Our analysis supports similar arguments made by \citet{ridgeway2018learning}, \citet{duan2019unsupervised}, and \citet{carbonneau2022measuring}.


A minor issue is that a modular and explicit encoder may have a \say{non-explicit} decoder:

\begin{example}[Redundancy]
Let $Z$ be $(Y_1 \times Y_1) \times Y_2$.
The morphism $m = \Delta_{Y_1} \times \mathrm{id}_{Y_2}$ satisfies both \cref{dis:mproduct} and \cref{dis:mmono}.
\end{example}

It means that $Z_1 := Y_1 \times Y_1$ contains redundant information of $Y_1$.
All meaningful codes are of the form $((y_1, y_1), y_2)$, while codes of the form $((y_1, y_1'), y_2)$ are meaningless and should not be decoded.
In categorical terms, $m$ is a product morphism, a split monomorphism, but not an \catterm{epimorphism}.
If we want to \emph{traverse the code space}, we can additionally require $m$ to be a (split) epimorphism.

\section{Algebra Actions and Equivariant Maps}
\label{sec:algebra}
\epigraph{We can simply change the category from $\cSet$ to $[\cS, \cSet]$.}

In this section, we explain the above sentence by showing three ways to extend \cref{dis:product} and how it relates to the definition based on the direct product of groups \cite{higgins2018towards}.


$[\cS, \cC]$ denotes the \catterm{functor category} of \catterm{functors} from $\cS$ to $\cC$ and \catterm{natural transformations} between these functors.
We call the category $\cS$ a \newterm{scheme}.
To see how it relates to the existing algebraic formulation of disentanglement, we need the following well-known fact:

\begin{definition}[Equivariance as naturality]
Many algebraic structures, such as monoids and groups, can be considered as \catterm{single-object categories}.
Then, an \emph{action} of an algebra at an object $A$ is precisely a \emph{functor} $F_A: \cS \to \cC$ from the corresponding scheme $\cS$ to a category $\cC$ containing $A$, and an \emph{equivariant map} $f: A \to B$ between two actions $F_A$ and $F_B$ is precisely a \emph{natural transformation} $\phi: F_A \nat F_B$.
\end{definition}


An example is shown below:
\begin{equation}
\begin{tikzcd}[column sep=1em, row sep=2.5em]
\cS
\arrow[d, "F_A"'{name=A}, bend right=4em, pos=.55]
\arrow[d, "F_B"{name=B}, bend left=4em, pos=.55]
\arrow[from=A, to=B, "\phi", natural transformation, shorten=1mm]
&[5em]&
\pone
\arrow[loop, "a"', distance=1em, out=125, in=145]
\arrow[loop, "a \cdot b"', distance=1em, out=80, in=100]
\arrow[loop, "b"', distance=1em, out=35, in=55]
\arrow[ld, "F_A"', functor]
\arrow[rd, "F_B", functor]
\\
\cC
&
A
\arrow[loop, "a_A"', distance=1em, out=125, in=145]
\arrow[loop, "\substack{(a \cdot b)_A\\=a_A \compL b_A}"', distance=1em, out=170, in=190]
\arrow[loop, "b_A"', distance=1em, out=215, in=235]
\arrow[rr, "f \defeq \phi_\pone"']
&&
B
\arrow[loop, "a_B"', distance=1em, out=35, in=55]
\arrow[loop, "\substack{(a \cdot b)_B\\=a_B \compL b_B}"', distance=1em, out=-10, in=10]
\arrow[loop, "b_B"', distance=1em, out=305, in=325]
\end{tikzcd}
\end{equation}
We use subscript $a_A \defeq F_A a$ as an abbreviation.
We can see that $F_A$ and $F_B$ send the single $\cS$-object $\pone$ to $\cC$-objects $A$ and $B$ and send endomorphisms to endomorphisms.
In this way, we can consider $\cS$ as \catterm{syntax} and $\cC$ as \catterm{semantics}.


\begin{example}[Regression vs.~Ranking]
Not all problems can be formulated using only endomorphisms, let alone groups.
Some \emph{ranking} problems \citep{liu2011learning} roughly correspond to finding order-preserving functions, which is equivariant to actions of the free monoid of natural numbers $\N$.
However, the usual \emph{regression} problems also require the preservation $f(x_0) = 0$ of the zero point, which is a nullary operation $\zero: 1 \to \N$ (a morphism from a singleton to the set $\N$).
\end{example}


\subsection{Product Category and Functor Product}

Let us now consider the products of categories and functors.
We highlight the following two important properties:
\begin{itemize}
\item The category of small categories $\cCat$ is cartesian closed, with the product and exponential object given by the \catterm{product category} $\cS_1 \times \cS_2$ and functor category $[\cS, \cC]$.
\item If $\cC$ has \catterm{(co)limits} of a certain shape (e.g., product), then $[\cS, \cC]$ has pointwise (co)limits of the same shape (e.g., \catterm{functor product} $F_1 \times^\cS F_2: \cS \xto{\angles{F_1, F_2}} \cC \times \cC \xto{\times} \cC$).\footnote{We reserve the term \catterm{product functor} to the product morphism in $\cCat$, i.e., $F_1 \times F_2: \cS_1 \times \cS_2 \to \cC_1 \times \cC_2$, to avoid confusion.}
\end{itemize}




Knowing if $\cC$ has products then so does $[\cS, \cC]$, we can now extend \cref{dis:product} straightforwardly by \emph{simply changing the category from $\cC$ to $[\cS, \cC]$}:

\begin{disentanglement}[A natural transformation to a functor product]
\label{dis:functor}
Let $\cS$ be a category, $\cC$ be a category with products, and $F_X, F_{Z_i}: \cS \to \cC, i \in [1..N]$ be functors.
A disentangled encoding process is a morphism to a product in $[\cS, \cC]$, i.e., a natural transformation $\phi: F_X \nat F_Z$ to a \emph{functor product} $F_Z \defeq \prod_{i=1}^N F_{Z_i}$.
\end{disentanglement}

In other words, the same scheme $\cS$ has $N$ different models via $F_{Z_i}$ in $\cC$, which are combined into a single model via product $F_Z$.
In the product group action example \citep{higgins2018towards}, \cref{dis:functor} means that the product group is viewed as a single-object category $\cS$, and the product structure of automorphisms is preserved via the functor product.


Another approach is to view each group as a single-object category and the product group as a product category.
Then, we can use the following definition:

\begin{disentanglement}[A natural transformation between multifunctors]
\label{dis:multifunctor}
Let $\cS = \prod_{i=1}^N \cS_i$ be a product category, $\cC$ be a category, and $F_X, F_Z: \cS \to \cC$ be multifunctors.
A disentangled encoding process is a morphism in $[\cS, \cC]$, i.e., a natural transformation $\phi: F_X \nat F_Z$ between \emph{multifunctors}.
\end{disentanglement}

That is, a scheme $\cS$ with $N$ components has a model in $\cC$.
We can see that \cref{dis:functor} defines disentanglement via the \emph{product of functors} (based on the product in the codomain category $\cC$), while \cref{dis:multifunctor} uses the \emph{product of domain categories} (based on the product in $\cCat$).
They have their own application scenarios, but due to space limits, we will not study \cref{dis:functor} and \cref{dis:multifunctor} further in this paper.


\subsection{Product-preserving Functors}

Instead, let us consider a definition based on the \emph{product in the domain category} $\cS$, which could be more flexible:

\begin{disentanglement}[A natural transformation at a product]
\label{dis:nat}
Let $\cS$ be a category with binary products, $\cC$ be a category, and $F_X, F_Z: \cS \to \cC$ be functors.
A disentangled encoding process is a \emph{component} of a natural transformation $\phi: F_X \nat F_Z$ at a \emph{product}.
\end{disentanglement}


Additionally, if the codomain category $\cC$ also has products, we can require that

\begin{disentanglementsub}
\label{dis:fzproduct}
$F_Z$ is \catterm{product-preserving}.
\end{disentanglementsub}

In other words, $F_Z$ should be a \catterm{cartesian (monoidal) functor}, so products and projections in $\cS$ are mapped to products and projections in $\cC$.
An example is shown below:
\begin{equation}
\begin{tikzcd}[column sep=3.2em, row sep=.6em]
&
\pone
\arrow[ldddd, functor, end anchor={[xshift=-2mm]north}]
\arrow[rddd, functor]
\arrow[loop, "a"', distance=1em, out=-10, in=10, start anchor={[yshift=-.5mm]east}, end anchor={[yshift=.5mm]east}]
&
\\
&
\pone \times \ptwo
\arrow[u]
\arrow[d]
\arrow[lddd, functor, end anchor={north}]
\arrow[rddd, functor]
\arrow[loop, "a \times b"' fill=white, distance=1em, out=-10, in=10, start anchor={[yshift=-.5mm]east}, end anchor={[yshift=.5mm]east}]
\\
&
\ptwo
\arrow[ldd, functor, end anchor={[xshift=2mm]north}]
\arrow[rddd, functor]
\arrow[loop, "b"', distance=1em, out=-10, in=10, start anchor={[yshift=-.5mm]east}, end anchor={[yshift=.5mm]east}]
\\[-1.4em]
&&
Z_1
\arrow[loop, "a_Z"', distance=1em, out=-10, in=10]
\\
X
\arrow[rr, "f \defeq \phi_{\pone \times \ptwo}" {description, inner sep=1pt}]
\arrow[rru, "\phi_\pone" {description, inner sep=1pt}]
\arrow[rrd, "\phi_\ptwo" {description, inner sep=1pt}]
\arrow[loop, "a_X"', distance=1em, out=125, in=145]
\arrow[loop, "(a \times b)_X"', distance=1em, out=170, in=190]
\arrow[loop, "b_X"', distance=1em, out=215, in=235]
&&
Z
\arrow[u]
\arrow[d]
\arrow[loop, "\substack{(a \times b)_Z\\=a_Z \times b_Z}"', distance=1em, out=-10, in=10]
\\
&&
Z_2
\arrow[loop, "b_Z"', distance=1em, out=-10, in=10]
\end{tikzcd}
\end{equation}
We can see that two $\cS$-objects $\pone$ and $\ptwo$ have a product $\pone \times \ptwo$.
$F_Z$ preserves products so $(a \times b)_Z = a_Z \times b_Z$.
A disentangled encoding process $f \defeq \phi_{\pone \times \ptwo}$ is a component of a natural transformation $\phi$ at a product $\pone \times \ptwo$.
Note that $X$ is not necessarily a product but its endomorphisms can have a product structure \citep{higgins2018towards}.


Next, let us check what the counterpart of \emph{modularity} is in the context of natural transformations.
What we will do here is essentially the same as what we showed in \cref{ssec:enc}.
Again, it is natural to consider a code generating process $\mu: F_Y \nat F_Z$ in $[\cS, \cC]$, and we have a counterpart of \cref{asm:yproduct} as follows:

\begin{assumption}
\label{asm:fyproduct}
$F_Y$ is product-preserving.
\end{assumption}

Then, we can simply say that a modular encoder $\mu$ is a \emph{natural transformation between product-preserving functors}.
Even more, we can prove the following property:

\begin{proposition}
\label{prop:muproduct}
$\forall \pone, \ptwo \in \cS.\; \mu_{\pone \times \ptwo} = \mu_\pone \times \mu_\ptwo$.
\end{proposition}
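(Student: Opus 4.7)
The plan is to reduce the equality to a universal-property check in $\cC$ and then read off both sides as the same pairing. Since $F_Z$ is product-preserving (Disentanglement~D.\ref{dis:fzproduct}), the cone $F_Z\pone \xot{F_Z p_1} F_Z(\pone \times \ptwo) \xto{F_Z p_2} F_Z\ptwo$ is a product in $\cC$. By the universal property, any two morphisms into $F_Z(\pone \times \ptwo)$ coincide iff they agree after post-composition with both $F_Z p_1$ and $F_Z p_2$, so the whole claim reduces to checking these two equations for each side.

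First I would pin down $\mu_{\pone \times \ptwo}$ by naturality. Applying naturality of $\mu$ to the $\cS$-projections $p_1: \pone \times \ptwo \to \pone$ and $p_2: \pone \times \ptwo \to \ptwo$ gives the commuting squares
$$F_Z p_1 \compL \mu_{\pone \times \ptwo} = \mu_\pone \compL F_Y p_1, \qquad F_Z p_2 \compL \mu_{\pone \times \ptwo} = \mu_\ptwo \compL F_Y p_2.$$
Hence $\mu_{\pone \times \ptwo}$ is forced, by the universal property above, to equal the unique pairing $\angles{\mu_\pone \compL F_Y p_1,\; \mu_\ptwo \compL F_Y p_2}$.

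Next I would unfold the right-hand side of the proposition. By Assumption~\ref{asm:fyproduct}, $F_Y$ preserves products, so $F_Y p_1$ and $F_Y p_2$ are the projections exhibiting $F_Y(\pone \times \ptwo)$ as a product of $F_Y\pone$ and $F_Y\ptwo$ in $\cC$. Under this identification, the bifunctorial product morphism $\mu_\pone \times \mu_\ptwo$, defined in Section~\ref{sec:product} as $\angles{\mu_\pone \compL p_1,\; \mu_\ptwo \compL p_2}$, is literally $\angles{\mu_\pone \compL F_Y p_1,\; \mu_\ptwo \compL F_Y p_2}$. This matches the expression obtained in the previous step, so the two morphisms are equal.

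The only subtle point, and the place where one must be careful, is the bookkeeping of the canonical isomorphisms supplied by product-preservation: one must treat $F_Y(\pone \times \ptwo)$ as the chosen product of $F_Y\pone, F_Y\ptwo$ via $(F_Y p_1, F_Y p_2)$, and similarly for $F_Z$, so that $F_Y p_i$ and $F_Z p_i$ may be used as genuine projections in $\cC$. Once the projections are fixed this way, the argument is a short diagram chase following the same pattern as the proof of Theorem~\ref{thm:exp}. I would draw it as a cube whose front and back faces are the two product cones in $\cC$ and whose side faces are the naturality squares for $p_1$ and $p_2$, making the equality immediate.
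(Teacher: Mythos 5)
Your proof is correct and follows essentially the same route as the paper's: naturality of $\mu$ at the two projections $p_1, p_2$, combined with product-preservation of $F_Y$ and $F_Z$ so that $F_Y p_i$ and $F_Z p_i$ serve as genuine projections, and then the universal property of the product forces both sides to equal the pairing $\angles{\mu_\pone \compL F_Y p_1,\; \mu_\ptwo \compL F_Y p_2}$. The paper's proof is exactly this diagram chase, drawn as a single commutative diagram with the comparison morphisms made explicit, so your handling of the canonical isomorphisms matches its treatment.
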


The reader should compare \cref{dis:fzproduct}, \cref{asm:fyproduct}, and \cref{prop:muproduct} with \cref{dis:mproduct}.


The following commutative diagram encompasses all the requirements (cf.~\cref{prop:m}):

\begin{equation}
\begin{tikzcd}[column sep=2em, row sep=.5em, labels={inner sep=1pt}]
Y
\arrow[rr, "\mu_A", pos=.7]
\arrow[dd, "p_i"', pos=.2]
\arrow[rd, "a_Y" description]
&&[1em]
Z
\arrow[dd, "p_i", pos=.2]
\\
&
Y
\arrow[rr, "\mu_A", pos=.3, crossing over]
&&
Z
\arrow[dd, "p_i", pos=.2]
\arrow[from=lu, "a_Z" description]
\\
Y_i
\arrow[rr, "\mu_{A_i}"', pos=.7]
\arrow[rd, "{a_i}_Y" description]
&&
Z_i \arrow[rd, "{a_i}_Z" description]
\\
&
Y_i
\arrow[rr, "\mu_{A_i}"', pos=.3]
\arrow[from=uu, "p_i"', pos=.2, crossing over]
&&
Z_i
\end{tikzcd}
\end{equation}
The three axes correspond to (i) product, (ii) endomorphism, and (iii) natural transformation, respectively.


Up to this point, our definition includes the one proposed by \citet{higgins2018towards} as a special case.
The reader may have noticed that there is only a counterpart of modularity \cref{dis:mproduct} but not explicitness \cref{dis:mmono}.
Without the requirement, we may encounter the same failure case:

\begin{example}[Constant]
The \catterm{constant functor} $\Delta 1$ satisfies \cref{dis:fzproduct} with a natural transformation $e_Y: Y \to 1$.
\end{example}


To patch this, one way is to require that

\begin{disentanglementsub}
\label{dis:fzfaithful}
$F_Z$ is \catterm{faithful}.
\end{disentanglementsub}

This means that $F_Z$ is injective on morphisms for each pair of $\cS$-objects.
We need to rule out unfaithful models of a scheme lest we end up with uninformative representations.
This requirement also tells us some basic properties the codes $Z$ should have such as the minimal size or dimension, depending on the choice of the scheme $\cS$.


On the other hand, the exact counterpart of explicitness \cref{dis:mmono} is as follows:

\begin{disentanglementsub}
\label{dis:mumono}
$\mu$ is a split monomorphism.
\end{disentanglementsub}

\cref{dis:mumono} is a stronger notion when $F_Y$ is also faithful:
\begin{theorem}
\label{thm:monofaithful}
$\cref{dis:mumono} \limp \cref{dis:fzfaithful}$.
\end{theorem}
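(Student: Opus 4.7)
The plan is to prove the contrapositive-style statement by chaining three facts: (i) a split mono in a functor category is pointwise a split mono in $\cC$, (ii) naturality of $\mu$ transports the action of $F_Z$ on morphisms to the action of $F_Y$, and (iii) $F_Y$ being faithful lets us conclude equality of morphisms in $\cS$.

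Concretely, suppose $a, b: A \to B$ in $\cS$ satisfy $F_Z a = F_Z b$; I want to show $a = b$. First I would unpack \cref{dis:mumono}: a splitting $\eta: F_Z \nat F_Y$ with $\eta \compL \mu = \id_{F_Y}$ gives, for each object $A \in \cS$, a component equation $\eta_A \compL \mu_A = \id_{F_Y A}$, so each $\mu_A$ is a split monomorphism in $\cC$, hence in particular a monomorphism. Next, the naturality squares of $\mu$ for $a$ and $b$ read
\begin{equation*}
\mu_B \compL F_Y a = F_Z a \compL \mu_A,
\qquad
\mu_B \compL F_Y b = F_Z b \compL \mu_A.
\end{equation*}
From $F_Z a = F_Z b$ the right-hand sides coincide, so $\mu_B \compL F_Y a = \mu_B \compL F_Y b$. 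Cancelling the monomorphism $\mu_B$ yields $F_Y a = F_Y b$, and finally \cref{asm:fyproduct}'s faithfulness of $F_Y$ (invoked as the hypothesis of the theorem) gives $a = b$. Thus $F_Z$ is faithful.

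I do not expect a real obstacle here: the argument is essentially the standard fact that monomorphisms reflect equalities through naturality, plus the elementary observation that split monomorphisms in a functor category are pointwise split. The only subtle point worth flagging is that faithfulness of $F_Z$ on each hom-set $\cS(A, B)$ is established uniformly by reading off the naturality square at the pair $(A, B)$, so no extra coherence is needed beyond what the hypothesis already supplies.
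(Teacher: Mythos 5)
Your argument is correct and is essentially the paper's own proof, just stated directly rather than contrapositively: both reduce to cancelling the (pointwise split, hence pointwise) monomorphism $\mu_B$ across the naturality square and then appealing to faithfulness of $F_Y$. One small attribution slip: the faithfulness of $F_Y$ is the side hypothesis stated in the text preceding the theorem, not part of \cref{asm:fyproduct} (which only asserts product-preservation); the mathematics is otherwise identical.
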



As a final note, we point out that \cref{dis:nat} is more flexible because it is not limited to endomorphisms:

\begin{example}[Binary operation]
Let $\pone$ be an $\cS$-object (which itself can be a product) and $c: \pone \times \pone \to \pone$ an $\cS$-morphism.
The following diagram describes how binary operations can exhibit disentanglement:
\begin{equation}
\begin{tikzcd}[column sep=1.5em, row sep=1em]
&
\pone \times \pone
\arrow[d, "c"]
\arrow[ldd, functor]
\arrow[rdd, functor]
\arrow[loop, "a \times b"', distance=1em, out=-10, in=10]
\\
&
\pone
\arrow[ldd, functor]
\arrow[rdd, functor]
\\[-.8em]
X \times X
\arrow[d, "c_X"']
\arrow[rr, "f \times f = \phi_{\pone \times \pone}"']
\arrow[loop, "a_X \times b_X"', distance=1em, out=170, in=190]
&&
Z \times Z
\arrow[d, "c_Z"]
\arrow[loop, "a_Z \times b_Z"', distance=1em, out=-10, in=10]
\\
X
\arrow[rr, "f \defeq \phi_\pone"']
&&
Z
\end{tikzcd}
\end{equation}
Regarding $c \circ (a \times b)$, the functoriality and naturality lead to the following requirement:
\begin{equation*}
f(c_X(a_X(x_1), b_X(x_2))) = c_Z(a_Z(f(x_1)), b_Z(f(x_2))).
\end{equation*}
\end{example}

This formulation is particularly useful when dealing with multiple instances or heterogeneous inputs \citep{gatys2016image, liu2018leveraging}.
Further investigation is left for future work.


In summary, we showed that seemingly distinct approaches to disentanglement \citep{ridgeway2018learning, higgins2018towards} can be described by the same abstract language, and their underlying mechanisms (e.g., modularity and product-preserving action) are essentially the same.
The core is the \emph{cartesian product} of sets, functions, algebras, actions, objects, morphisms, categories, and functors.

\section{Sets and Relations}
\label{sec:relation}
\epigraph{The Cartesian product of sets is not cartesian in $\cRel$.}

In this section, we present an example of (non-cartesian) monoidal products using $\cRel$, the category of sets and relations \citep{patterson2017knowledge}.

We may want to work with relations instead of functions if we need to consider
(i) unannotated factors,
(ii) multiple observations for the same factor, or
(iii) only a subset of all combinations of factors.
Besides, $\cRel$ serves as a bridge between functions and probabilities, which will be discussed in the next section.


To characterize $\cRel$, it is convenient to consider it as the \catterm{Kleisli category} of the \catterm{powerset monad} $P$ in $\cSet$:
\begin{equation}
\cRel \defeq \cSet_P.
\end{equation}
The powerset monad $P$ sends a set $A$ to its powerset $PA$ and a function $f: A \to B$ to a set function $Pf: PA \to PB$.\footnote{The map on morphisms is the polymorphic function \lstinline{fmap :: Functor f => (a -> b) -> f a -> f b} in Haskell.}
Its Kleisli category $\cRel$ has relations $A \klto B$ as the \catterm{Kleisli morphisms}, which are precisely set-valued functions $A \to PB$.
The composition is the \catterm{Kleisli composition} $g \KleisliL f$,\footnote{This is the \say{left fish} operator in Haskell: \lstinline{(<=<) :: Monad m => (b -> m c) -> (a -> m b) -> a -> m c}.} given by the \emph{relation composition}.

Relations arise naturally in practice.
For example, if we have a \newterm{labeling process} $l: X \to Y$, which is a function in $\cSet$, a data generating process $g: Y \klto X \defeq l^*$ can be defined as its \catterm{inverse image}, which is not a function anymore but a relation in $\cRel$.
Then, $g$ now can map a factor to multiple observations or the empty set.


\subsection{Monoidal Product of Relations}

Next, let us examine the product structures in $\cRel$.
We point out the following three important facts about $\cRel$:
\begin{itemize}
\item $\cRel$ is cartesian and cocartesian, with both the product and coproduct given by the \emph{disjoint union} of sets $A \oplus B$.
\item $\cRel$ is \catterm{monoidal closed}, with both the monoidal product and \catterm{internal hom} given by the \emph{Cartesian product} of sets $A \otimes B$ and the monoidal unit given by the singleton $\singleton$.
\item $\cRel$ is \catterm{pointed}, with the \catterm{zero object} (an object that is both initial and terminal) given by the empty set $\varnothing$.
\end{itemize}
That is, in $\cRel$, the Cartesian product of sets is \emph{monoidal}, but confusingly, \emph{not cartesian}.
So a relation $A \klto B \otimes C$ to a Cartesian product of two sets is more than just a pair of relations $A \klto B$ and $A \klto C$.
On the other hand, the monoidal product/internal hom $\otimes$ gives us an isomorphism between hom-sets:
\begin{equation}
\label{eq:relhom}
\Hom(A \otimes B, C) \iso \Hom(A, B \otimes C),
\end{equation}
which leads to the following example:
\begin{equation}
\begin{tikzcd}[column sep=1em, row sep=0, nodes={inner sep=1pt}]
(a, 0)
\arrow[rd, start anchor=east, end anchor=west, dash]
\arrow[rdd, start anchor=east, end anchor=west, dash]
\\
(a, 1)
\arrow[rd, start anchor=east, end anchor=west, dash]
&
x
\\
(b, 0)
&
y
\\
(b, 1)
\arrow[ruu, start anchor=east, end anchor=west, dash]
\\[.3em]
A \otimes B
\arrow[r, rightsquigarrow]
&
C
\end{tikzcd}
\;\;
\iso
\;\;
\begin{tikzcd}[column sep=1em, row sep=0, nodes={inner sep=1pt}]
&
(0, x)
\\
a
\arrow[ru, start anchor=east, end anchor=west, dash]
\arrow[r, start anchor=east, end anchor=west, dash]
\arrow[rdd, start anchor=east, end anchor=west, dash]
&
(0, y)
\\
b
\arrow[r, start anchor=east, end anchor=west, dash]
&
(1, x)
\\
&
(1, y)
\\[.3em]
A
\arrow[r, rightsquigarrow]
&
B \otimes C
\end{tikzcd}
\;\;
\ncong
\;\;
\begin{tikzcd}[column sep=1em, row sep=0, nodes={inner sep=1pt}]
a
\arrow[r, start anchor=east, end anchor=west, dash]
\arrow[rd, start anchor=east, end anchor=west, dash]
&
0
\\
b
\arrow[r, start anchor=east, end anchor=west, dash]
&
1
\\[.3em]
A
\arrow[r, rightsquigarrow]
&
B
\end{tikzcd}
\;
\otimes
\;
\begin{tikzcd}[column sep=1em, row sep=0, nodes={inner sep=1pt}]
a
\arrow[r, start anchor=east, end anchor=west, dash]
\arrow[rd, start anchor=east, end anchor=west, dash]
&
x
\\
b
\arrow[ru, start anchor=east, end anchor=west, dash]
&
y
\\[.3em]
A
\arrow[r, rightsquigarrow]
&
C
\end{tikzcd}
\end{equation}


$\cRel$ is an example of how the cartesian product $\oplus$ is not an appropriate concept for disentanglement, while a suitable one $\otimes$ only has a monoidal structure.
The monoidal unit $\singleton$ is different from the terminal object $\varnothing$, so $\cRel$ is not even semicartesian.
Although we still can define the \say{duplicating} and \say{deleting} operations \citep[Section 3.3]{patterson2017knowledge}, they do not behave as nicely as those diagonal and terminal morphisms in $\cSet$ because of their non-naturality.


Then, how can we characterize disentanglement?
At least, we still have a counterpart of disentanglement \cref{dis:product}:

\begin{disentanglement}[A morphism to a monoidal product]
\label{dis:monoidal}
In a symmetric monoidal category $\cC$, a disentangled encoding process is a morphism $f: X \to Z$ to a \emph{monoidal product} $Z \defeq \bigotimes_{i=1}^N Z_i$.
\end{disentanglement}

Further, we can extend the definition of modularity \cref{dis:mproduct}:

\begin{disentanglementsub}
\label{dis:mmonoidal}
$m = \bigotimes_{i=1}^N (m_{i,i}: Y_i \to Z_i)$.
\end{disentanglementsub}

So, \cref{dis:product} and \cref{dis:mproduct} are special cases of \cref{dis:monoidal} and \cref{dis:mmonoidal} for a cartesian category.
However, without projections, \cref{dis:mmonoidal} is more difficult to verify than \cref{dis:mproduct}.


Then, how can we resolve this?
One way is to restrict our attention to \emph{right-unique relations}, i.e., \emph{partial functions}, so duplication behaves nicely (\cref{eq:diagonal}), but it means that there is at most one observation for each factor.
We can also focus on \emph{left-total relations}, i.e., \emph{multivalued functions}, so deletion behaves nicely (\cref{eq:terminal}), but we need to assume that there is at least one observation for each factor \citep[Example 2.6]{fritz2020synthetic}.
If we want both, then we will end up with $\cSet$ --- a cartesian subcategory of $\cRel$.
Despite its many good properties, $\cSet$ might be too restrictive if we want to incorporate uncertainty in disentanglement.
Later we will see that a \emph{semicartesian category with (not necessarily natural) diagonals} might be a balanced choice, which provides a rich collection of operations to characterize disentanglement.


\subsection{Functor Category, Revisited}

Before moving on to the next section, \say{can we change from $\cRel$ to $[\cS, \cRel]$?} we have to ask.
First, the fact that $[\cS, \cC]$ has a pointwise monoidal structure derived from $\cC$ tells us that \cref{dis:functor} generalizes to the \emph{functor monoidal product} straightforwardly.
Second, \cref{dis:fzproduct} is a special case of the following requirement for a cartesian category:

\begin{disentanglementsubalt}{dis:fzproduct}
$F_Z$ is a \catterm{monoidal functor}.
\end{disentanglementsubalt}

\citet{higgins2018towards} mainly worked with the direct sum $\oplus$ (direct product $\times$) of vector spaces and briefly mentioned the tensor product $\otimes$.
We remind the reader that their decisive difference is between the cartesian and monoidal products.

\section{Measurable Spaces and Stochastic Maps}
\label{sec:probability}
\epigraph{We can copy/delete in a Markov category like $\cStoch$.}

Besides the algebraic approach \citep{higgins2018towards}, the probabilistic, statistical, and information-theoretic methods \citep{higgins2017betavae, chen2018isolating, kumar2018variational, suter2019robustly, do2020theory} are perhaps the most popular tools for disentangled representation learning.
In this section, we outline the essential operations required for characterizing disentanglement of stochastic maps.

The structure is similar to that of $\cRel$: the category $\cStoch$ of measurable spaces and stochastic maps (Markov kernels) is the Kleisli category of the \catterm{Giry monad} $P$ in the category $\cMeas$ of measurable spaces and measurable functions:
\begin{equation}
\cStoch \defeq \cMeas_P.
\end{equation}
The Giry monad $P$ sends a measurable set $A$ to the set $PA$ of probability measures on $A$ and a measurable function $f: A \to B$ to its pushforward $f_*: PA \to PB$.
The Kleisli morphisms are stochastic maps $p(B|A)$, and the Kleisli composition $p(C|A) = p(C|B) \KleisliL p(B|A)$ is the \emph{Chapman--Kolmogorov equation} \citep{giry1982categorical}.


\subsection{Joint Distribution and Conditional Independence}

Next, let us start by highlighting the impossibility result in \citet{locatello2019challenging}, which is essentially about the product structures of $\cStoch$.
It can be succinctly restated in the categorical language as

\begin{theorem}[{\citet[Theorem 1]{locatello2019challenging}}]
$\cStoch$ is not cartesian.
\end{theorem}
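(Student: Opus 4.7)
The plan is to derive a contradiction from the existence of a product in $\cStoch$ of the two-point space $A = \{0, 1\}$ (with its discrete $\sigma$-algebra) with itself. Suppose $(Q, q_1, q_2)$ is such a product. Applied with the singleton terminal object $I$, the universal property turns the marginalization map $\rho \mapsto ((q_1)_*\rho, (q_2)_*\rho)$ into a bijection between $PQ$ and $PA \times PA$.

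Identifying each Markov kernel into $\{0, 1\}$ with a measurable $[0, 1]$-valued function (the probability of outputting $1$), surjectivity of the universal bijection supplies, for each $(i, j) \in \{0, 1\}^2$, a probability measure $\rho_{ij} \in PQ$ under which $q_1 \equiv i$ and $q_2 \equiv j$ almost surely. Since $\rho_{ij}$ concentrates on $\{q_1 = i\} \cap \{q_2 = j\}$, the four essential supports are pairwise disjoint. The convex combinations $\rho = \tfrac12(\rho_{00} + \rho_{11})$ and $\rho' = \tfrac12(\rho_{01} + \rho_{10})$ are then distinct probability measures on $Q$ whose $(q_1)_*$ and $(q_2)_*$ marginals both equal the uniform measure on $A$, contradicting injectivity of the universal bijection.

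The main subtlety is that the product object $Q$ is not presumed to have any special shape --- only the universal property is available --- so the argument cannot appeal to the Cartesian product of measurable spaces with its deterministic projections. The two-point target $A$ is what makes the plan go through: a Markov kernel into $\{0, 1\}$ is a single $[0, 1]$-valued function, so Dirac-marginal conditions become almost-sure pointwise constraints on $q_1$ and $q_2$ that force disjointness of supports. A parallel route via Fox's characterization of cartesian monoidal categories --- showing that the copy map $A \to A \otimes A$ fails naturality against a non-deterministic kernel, since sampling-then-copying yields perfect correlation while copying-then-independently-sampling gives independence --- would also suffice, but the direct measure-theoretic construction above is more concrete and closer to the impossibility argument of \citet{locatello2019challenging}.
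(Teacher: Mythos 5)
Your proof is correct, and it is considerably more self-contained than what the paper itself supplies: the paper does not prove this theorem but defers to \citet[Theorem 1]{locatello2019challenging}, accompanying the citation with the informal diagram in which the canonical cone $(Z_1 \otimes Z_2, \pi_1, \pi_2)$ fails the uniqueness half of the universal property because a joint distribution is not determined by its marginals. Your argument rests on the same underlying observation --- distinct joints can share identical marginals --- but upgrades it to a genuine impossibility proof: by taking an \emph{arbitrary} candidate product $(Q, q_1, q_2)$ of the two-point space with itself and applying the universal property only at the one-point object $I$, you rule out the existence of any product whatsoever, rather than merely observing that the obvious monoidal candidate with its Markov projections is not one. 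The two-point target is exactly the right device here: a kernel into $\{0,1\}$ is a single measurable $[0,1]$-valued function, so the Dirac-marginal conditions become almost-sure constraints that yield the four disjointly supported measures $\rho_{ij}$, and the two mixtures $\tfrac12(\rho_{00}+\rho_{11})$ and $\tfrac12(\rho_{01}+\rho_{10})$ are distinct (they assign different mass to the measurable set $\{q_1(\{1\}\mid\cdot)=0\}\cap\{q_2(\{1\}\mid\cdot)=0\}$) yet have equal composites with $q_1$ and $q_2$, contradicting uniqueness. What your route buys is a rigorous categorical statement where the paper offers a restatement-plus-citation; what the paper's route buys is the explicit link to the inverse-transform-sampling construction of \citet{locatello2019challenging} and to the string-diagrammatic picture it develops afterward. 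Your closing remark about the Fox-style argument via non-naturality of the copy map is also a valid alternative, but the direct measure-theoretic construction you chose is the cleaner match to the cited impossibility result.
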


This theorem implies the following diagram (cf.~\cref{eq:product}):
\begin{equation}
\label{eq:marginal}
\begin{tikzcd}[column sep=2em, row sep=2em]
&
I
\arrow[d, "p"]
\arrow[ld, "p_1"']
\arrow[rd, "p_2"]
\\
Z_1
&
Z_1 \otimes Z_2
\arrow[l, "\pi_1"]
\arrow[r, "\pi_2"']
\arrow[loop, "f \neq \id_{Z_1 \otimes Z_2}"', distance=1em, out=260, in=280, pos=.85]
&
Z_2
\end{tikzcd}
\end{equation}
It means that a joint distribution $p(Z_1, Z_2)$ is not uniquely specified by its marginals $p_1(Z_1)$ and $p_2(Z_2)$.
\citet{locatello2019challenging} explicitly constructed a family of bijections $f: Z \to Z$ using the inverse transform sampling technique.


Note the projection morphisms $\pi_1$ and $\pi_2$ used in \cref{eq:marginal}, which are available because $\cStoch$ is a Markov category \citep{fritz2020synthetic}.
A Markov category, roughly speaking, is a category in which every object $A$ is equipped with a \say{copy} $\cpy_A: A \to A \otimes A$ (not necessarily natural in $A$) and a \say{delete}  $\del_A: A \to I$ (natural in $A$) morphism satisfying some coherence conditions.
Therefore, all morphisms are \newterm{deletable} but only some are \newterm{copyable}, which allows for a sufficiently expressive category with enough operations to characterize disentanglement:

\begin{disentanglement}[A Markov kernel to a joint]
\label{dis:markov}
In a Markov category $\cC$, a disentangled encoding process is a \emph{Markov kernel} $f: X \to Z$ to a \emph{joint} $Z \defeq \bigotimes_{i=1}^N Z_i$.
\end{disentanglement}


We point out that the \catterm{conditional independence} $A \ind B \condin C$ of a Markov kernel $p(A, B | C)$ \citep[Definition 12.12]{fritz2020synthetic} can be used to derive a prerequisite for the modularity of an encoder $m: Y \to Z$:

\begin{disentanglementsub}
\label{dis:mindout}
$\forall i \in [1..N].\; Z_i \ind Z_{\setminus i} \condin Y$.
\end{disentanglementsub}


Let $m_i: Y \to Z_i \defeq \del_{Z_{\setminus i}} \compL m$ be the \catterm{marginalization} of $m$ over $Z_{\setminus i}$ and $\cpy_A^N: A \to A^{\otimes N}$ a \say{multiple copy} morphism.
We can prove that \cref{dis:mindout} is equivalent to the following equational identity (cf.~\cref{eq:product}):

\begin{disentanglementsub}
\label{dis:mproject}
$m = \parens{\bigotimes_{i=1}^N m_i} \compL \cpy_Y^N$.
\begin{equation*}
\begin{tikzpicture}
  \begin{pgfonlayer}{nodelayer}
    \node at (0, 0) {$=$};
    \node (1)  at (-1.5, -.4) {};
    \node (2)  at (-1.5,   0) {};
    \node (3)  at (-2  ,   0) {};
    \node (4)  at (-2  ,  .4) {};
    \node (5)  at (-1  ,   0) {};
    \node (6)  at (-1  ,  .4) {};
    \node (7)  at ( 2.5, -.7) {};
    \node (8)  at ( 2.5, -.5) [style=diagonal] {};
    \node (9)  at ( 1.5,   0) {};
    \node (10) at ( 3.5,   0) {};
    \node (11) at ( 1  ,   0) {};
    \node (12) at ( 1  ,  .7) {};
    \node (13) at ( 2  ,   0) {};
    \node (14) at ( 2  ,  .5) [style=diagonal] {};
    \node (15) at ( 3  ,   0) {};
    \node (16) at ( 3  ,  .5) [style=diagonal] {};
    \node (17) at ( 4  ,   0) {};
    \node (18) at ( 4  ,  .7) {};
    \node at (-1.5, -.6) {$Y$};
    \node at (-2  ,  .6) {$Z_1$};
    \node at (-1  ,  .6) {$Z_2$};
    \node at (-1.5,   0) [style=morphism, minimum width=1.6cm] {$m$};
    \node at ( 1.5,   0) [style=morphism, minimum width=1.6cm] {$m$};
    \node at ( 3.5,   0) [style=morphism, minimum width=1.6cm] {$m$};
  \end{pgfonlayer}
  \begin{pgfonlayer}{edgelayer}
    \draw (1.center) to (2.center);
    \draw (3.center) to (4.center);
    \draw (5.center) to (6.center);
    \draw (7.center) to (8.center);
    \draw [out=180, in=-90, looseness=1.2] (8.center) to (9.center);
    \draw [out=  0, in=-90, looseness=1.2] (8.center) to (10.center);
    \draw (11.center) to (12.center);
    \draw (13.center) to (14.center);
    \draw (15.center) to (16.center);
    \draw (17.center) to (18.center);
  \end{pgfonlayer}
\end{tikzpicture}
\end{equation*}
\end{disentanglementsub}

\begin{theorem}
\label{thm:indout}
$\cref{dis:mindout} \leqv \cref{dis:mproject}$.
\end{theorem}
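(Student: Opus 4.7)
The plan is to reduce both conditions to the standard binary-split characterization of conditional independence in a Markov category. Recall that by \citet[Definition~12.12 and surrounding discussion]{fritz2020synthetic}, for any morphism $k : Y \to A \otimes B$ one has $A \ind B \condin Y$ if and only if
\begin{equation*}
k = (k_A \otimes k_B) \compL \cpy_Y,
\end{equation*}
where $k_A \defeq (\id_A \otimes \del_B) \compL k$ and $k_B \defeq (\del_A \otimes \id_B) \compL k$ are the two marginals. I will use this reformulation throughout, instantiated with $A = Z_i$ and $B = Z_{\setminus i}$.

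For the direction $\cref{dis:mproject} \Rightarrow \cref{dis:mindout}$, assume the full factorization and fix $i$. Using the coassociativity and commutativity of $\cpy_Y$, regroup the $N$-fold copy and the tensor factors around the $i$-th strand to obtain $m = (m_i \otimes h_i) \compL \cpy_Y$, where $h_i \defeq (\bigotimes_{j \neq i} m_j) \compL \cpy_Y^{N-1}$. Post-composing this equation with $\del_{Z_i} \otimes \id_{Z_{\setminus i}}$ and applying counitality of $\cpy$ and naturality of $\del$ shows $h_i = m_{\setminus i}$, which is exactly the binary factorization equivalent to $Z_i \ind Z_{\setminus i} \condin Y$.

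For the direction $\cref{dis:mindout} \Rightarrow \cref{dis:mproject}$, I will induct on $N$. The base case $N = 2$ is precisely the binary restatement. For $N \geq 3$, applying the CI at $i = 1$ gives $m = (m_1 \otimes m_{\setminus 1}) \compL \cpy_Y$. The key step is to verify that $m_{\setminus 1} : Y \to \bigotimes_{j \geq 2} Z_j$ itself satisfies the conditional independences required by the inductive hypothesis, namely $Z_j \ind Z_{\setminus \{1, j\}} \condin Y$ for each $j \geq 2$, and that its marginal onto $Z_j$ is still $m_j$. Both facts follow by post-composing the original CI identities and marginal definitions with $\del_{Z_1}$ and collapsing the resulting diagrams via naturality of $\del$ and counitality of $\cpy$. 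Applying the inductive hypothesis to $m_{\setminus 1}$ and substituting back, together with $\cpy_Y^N = (\id_Y \otimes \cpy_Y^{N-1}) \compL \cpy_Y$, yields the desired $N$-fold factorization.

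The main obstacle is the bookkeeping in the inductive step: showing that the marginals of $m_{\setminus 1}$ agree with the original $m_j$ and that the pairwise conditional independences restrict cleanly along deletion. These manipulations are routine in string-diagrammatic form but require applying the Markov axioms — naturality of $\del$, counitality of $\cpy$, and the commutative comonoid structure on $Y$ — in the correct order; the underlying intuition is just that ``deletion commutes with everything.''
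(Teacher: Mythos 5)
Your proof is correct and follows essentially the same route as the paper's: the base case $N=2$ is exactly the binary characterization of conditional independence from \citet[Lemma~12.11]{fritz2020synthetic}, and the general case is obtained by applying it recursively, checking along the way that marginalization (post-composition with $\del$) preserves both the marginals and the remaining conditional independences. The paper states this recursion only by a string diagram for $N=3$; your write-up just makes the induction and its bookkeeping explicit.
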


We call an encoder satisfying \cref{dis:mproject} \newterm{projectable}.
This is a more fine-grained condition than the \emph{total correlation} used in \citet{chen2018isolating} because it is conditioned on the factors.


With this precondition, we can finally define the modularity of a stochastic encoder:

\begin{disentanglementsub}
\label{dis:mindin}
$\forall i \in [1..N].\; \forall n_i: Y_i \to Y_{\setminus i}.\;$
$Z_i \ind Y_{\setminus i} \condin Y_i$.
\begin{equation*}
\begin{tikzpicture}
  \begin{pgfonlayer}{nodelayer}
    \node at (0, 0) {$=$};
    \node (1)  at (-1  , - .8) {};
    \node (2)  at (-1  , - .6) [style=diagonal] {};
    \node (3)  at (-2.5,  1.3) {};
    \node (4)  at (-1  ,   .4) [style=diagonal] {};
    \node (5)  at (-1.5,   .9) {};
    \node (6)  at (- .5,   .9) {};
    \node (7)  at (- .5,  1.3) {};
    \node (8)  at (-2.5,  1.3) {};
    \node (9)  at ( 2  , - .8) {};
    \node (10) at ( 2  , - .3) [style=diagonal] {};
    \node (11) at ( 2  , - .6) [style=diagonal] {};
    \node (12) at (  .5,   .7) {};
    \node (13) at ( 1.5,   .2) {};
    \node (14) at ( 2.5,   .2) {};
    \node (15) at ( 1.5,   .7) {};
    \node (16) at ( 2.5,  1.3) {};
    \node (17) at (  .5,  1.3) {};
    \node at (-1  , -1  ) {$Y_i$};
    \node at (-2.5,  1.5) {$Z_i$};
    \node at (- .5,  1.5) {$Y_{\setminus i}$};
    \node at ( 2  , -1  ) {$Y_i$};
    \node at (  .5,  1.5) {$Z_i$};
    \node at ( 2.5,  1.5) {$Y_{\setminus i}$};
    \node at (-2  ,  .9) [style=morphism, minimum width=1.6cm] {$m_i$};
    \node at (-1  , -.1) [style=morphism, minimum width=.6cm] {$n_i$};
    \node at ( 1  ,  .9) [style=morphism, minimum width=1.6cm] {$m_i$};
    \node at ( 1.5,  .2) [style=morphism, minimum width=.6cm] {$n_i$};
    \node at ( 2.5,  .2) [style=morphism, minimum width=.6cm] {$n_i$};
  \end{pgfonlayer}
  \begin{pgfonlayer}{edgelayer}
    \draw (1.center) to (2.center);
    \draw [out=180, in=-90, looseness=1.5] (2.center) to (3.center);
    \draw (2.center) to (4.center);
    \draw [out=180, in=-90, looseness=1.5] (4.center) to (5.center);
    \draw [out=  0, in=-90, looseness=1.5] (4.center) to (6.center);
    \draw (6.center) to (7.center);
    \draw (3.center) to (8.center);
    \draw (9.center) to (10.center);
    \draw [out=180, in=-90, looseness=1.5] (11.center) to (12.center);
    \draw [out=180, in=-90, looseness=1.5] (10.center) to (13.center);
    \draw [out=  0, in=-90, looseness=1.5] (10.center) to (14.center);
    \draw (13.center) to (15.center);
    \draw (14.center) to (16.center);
    \draw (12.center) to (17.center);
  \end{pgfonlayer}
\end{tikzpicture}
\end{equation*}
\end{disentanglementsub}

The reader may have noticed that this means that any $n_i: Y_i \to Y_{\setminus i}$ behaves like a \catterm{deterministic morphism} \citep[Definition 10.1]{fritz2020synthetic} when composed with $m_i: Y \to Z_i$.


Why do we need this?
It is because, not like $\cRel$, where $A \otimes B \klto C$ is the same thing as $A \klto B \otimes C$ (\cref{eq:relhom}), in $\cStoch$, $\Hom(A, B \otimes C)$ is larger than $\Hom(A \otimes B, C)$.
We need a \say{probe} in $\Hom(A, B)$ to characterize how $C$ depends on $B$, and $n_i: Y_i \to Y_{\setminus i}$ serves as this probe.

Based on this, we revealed a common loophole in existing statistical approaches: if we use the mutual information between factors and codes to characterize disentanglement, the distribution of factors is assumed to be fixed \citep{chen2018isolating, li2020progressive, tokui2022disentanglement}.
However, the training and test distributions could be different \citep{trauble2021disentangled}, and the existing definitions may be too coarse-grained and insufficient in such a situation.


\subsection{Structured Markov Kernels}

An important fact is that \emph{the category of functors to the subcategory of deterministic morphisms is again a Markov category} \citep[Section 7]{fritz2020synthetic}, so we can deal with \say{structured} Markov kernels.
We end our discussion with an example based on this fact, without any category theory jargon, to show what we can get from our formulation.

\begin{example}[{$[\N, \cSet_N]_{\det}$}]
A robot processing video feed of multiple objects should be able to
(i) identify objects,
(ii) understand that objects continue to exist even if they are occluded (\emph{object permanence}), and
(iii) track the movement of hidden objects (\emph{invisible displacement}).
All these abilities should not be affected by the shape or color of the objects.
\end{example}

With category theory, we can formulate such a complex task with ease because the components are \emph{compositional}.
See \cref{app:example} for a detailed explanation.

\section{Limitations}
As an initial step toward categorical characterization of disentanglement, this work only focused on the definitions.
Many other important aspects of disentanglement were excluded, such as metrics, supervision, learnability, models, methods, and experimental validation.

With a clear understanding of the definitions in place, the immediate next step would be to find a systematic way to \emph{enrich a definition to a metric}.
We hypothesize that a good direction includes the following three steps:
\begin{itemize}
\item equality $\leadsto$ metric
\item universal quantification $\leadsto$ aggregation
\item existential quantification $\leadsto$ approximation
\end{itemize}

With a good metric, we can quantify how good a model is, even if it does not strictly satisfy a disentanglement definition.
For example, from \cref{thm:moddec}, we know that a modular and explicit encoder must have a modular decoder.
Given some modularity and explicitness metrics, we may want to know \emph{how much} the modularity and explicitness of an encoder imply the modularity of its decoder.

Other potential future directions include the studies of partial combination of factors (\cref{sec:relation}) and unknown projection (\cref{asm:zproduct}).
The relation between \cref{dis:functor}, \cref{dis:multifunctor}, and \cref{dis:nat} deserves further investigation.
How to formulate disentanglement in more complex learning problems, such as reinforcement learning, is also an interesting direction.
While we have obtained more results for cartesian categories due to their favorable properties, further theoretical analyses on the monoidal category case would be useful.

\section{Conclusion}
In this work, we presented a meta-analysis of several definitions of disentanglement \citep{cohen2014learning, cohen2015transformation, ridgeway2018learning, eastwood2018framework, higgins2018towards, chen2018isolating} using \emph{category theory} as a unifying language.
We revealed that some seemingly distinct formulations are just different manifestations of the same structures, the \emph{cartesian products and monoidal products}, in different categories.
We argued that the modularity (\emph{product morphism}) and explicitness (\emph{split monomorphism}) should be the defining properties of disentanglement and introduced tools to analyze these properties in various settings, including equivariant maps (\emph{functor category}) and stochastic maps (\emph{Markov category}).
We also reinterpreted some existing results \citep{locatello2019challenging} and provided support to some arguments based on empirical evidence \citep{ridgeway2018learning, trauble2021disentangled}.
We hope our findings can help researchers choose the most appropriate definition of disentanglement for their specific task and consequently discover better metrics, models, methods, and algorithms for disentangled representation learning.

\section*{Acknowledgements}
We would like to thank Tobias Fritz for answering our questions about Markov categories.
We thank Wei Wang and Johannes Ackermann for their valuable feedback on the draft.
We also thank the anonymous reviewers for their useful comments and constructive suggestions. 
Finally, we would like to express our gratitude to all contributors to nLab, MathOverflow, and StackExchange for creating a sharing community.

YZ was supported by JSPS KAKENHI Grant Number 22J12703.
MS was supported by JST CREST Grant Number JPMJCR18A2.

\clearpage
\bibliography{references}

\begin{thebibliography}{45}
\providecommand{\natexlab}[1]{#1}
\providecommand{\url}[1]{\texttt{#1}}
\expandafter\ifx\csname urlstyle\endcsname\relax
  \providecommand{\doi}[1]{doi: #1}\else
  \providecommand{\doi}{doi: \begingroup \urlstyle{rm}\Url}\fi

\bibitem[Ad{\'a}mek et~al.(1990)Ad{\'a}mek, Herrlich, and
  Strecker]{herrlich1990abstract}
Ad{\'a}mek, J., Herrlich, H., and Strecker, G.~E.
\newblock \emph{Abstract and Concrete Categories: The Joy of Cats}.
\newblock John Wiley and Sons, 1990.
\newblock URL
  \url{http://www.tac.mta.ca/tac/reprints/articles/17/tr17abs.html}.

\bibitem[Awodey(2006)]{awodey2006category}
Awodey, S.
\newblock \emph{Category theory}.
\newblock Oxford University Press, 2006.
\newblock URL \url{https://doi.org/10.1093/acprof:oso/9780198568612.001.0001}.

\bibitem[Baez(2017)]{baez2017applied}
Baez, J.
\newblock Applied category theory 2018 | the n-category café, 2017.
\newblock URL
  \url{https://golem.ph.utexas.edu/category/2017/09/applied_category_theory_1.html}.

\bibitem[Baez et~al.(2011)Baez, Fritz, and Leinster]{baez2011characterization}
Baez, J.~C., Fritz, T., and Leinster, T.
\newblock A characterization of entropy in terms of information loss.
\newblock \emph{Entropy}, 13\penalty0 (11):\penalty0 1945--1957, 2011.
\newblock URL \url{https://doi.org/10.3390/e13111945}.
\newblock \url{https://arxiv.org/abs/1106.1791}.

\bibitem[Bengio et~al.(2013)Bengio, Courville, and
  Vincent]{bengio2013representation}
Bengio, Y., Courville, A., and Vincent, P.
\newblock Representation learning: A review and new perspectives.
\newblock \emph{IEEE transactions on pattern analysis and machine
  intelligence}, 35\penalty0 (8):\penalty0 1798--1828, 2013.
\newblock URL \url{https://doi.org/10.1109/TPAMI.2013.50}.
\newblock \url{https://arxiv.org/abs/1206.5538}.

\bibitem[Borceux(1994)]{borceux1994handbook}
Borceux, F.
\newblock \emph{Handbook of categorical algebra: volume 1, Basic category
  theory}, volume~1.
\newblock Cambridge University Press, 1994.
\newblock URL \url{https://doi.org/10.1017/CBO9780511525858}.

\bibitem[Bradley(2018)]{bradley2018applied}
Bradley, T.-D.
\newblock What is applied category theory?
\newblock \emph{arXiv preprint arXiv:1809.05923}, 2018.
\newblock URL \url{https://arxiv.org/abs/1809.05923}.

\bibitem[Carbonneau et~al.(2022)Carbonneau, Zaidi, Boilard, and
  Gagnon]{carbonneau2022measuring}
Carbonneau, M.-A., Zaidi, J., Boilard, J., and Gagnon, G.
\newblock Measuring disentanglement: A review of metrics.
\newblock \emph{IEEE Transactions on Neural Networks and Learning Systems},
  2022.
\newblock URL \url{https://doi.org/10.1109/TNNLS.2022.3218982}.
\newblock \url{https://arxiv.org/abs/2012.09276}.

\bibitem[Chen et~al.(2018)Chen, Li, Grosse, and Duvenaud]{chen2018isolating}
Chen, R.~T., Li, X., Grosse, R.~B., and Duvenaud, D.~K.
\newblock Isolating sources of disentanglement in variational autoencoders.
\newblock In \emph{Neural Information Processing Systems}, 2018.
\newblock URL
  \url{https://proceedings.neurips.cc/paper/2018/hash/1ee3dfcd8a0645a25a35977997223d22-Abstract.html}.

\bibitem[Cho \& Jacobs(2019)Cho and Jacobs]{cho2019disintegration}
Cho, K. and Jacobs, B.
\newblock Disintegration and bayesian inversion via string diagrams.
\newblock \emph{Mathematical Structures in Computer Science}, 29\penalty0
  (7):\penalty0 938--971, 2019.
\newblock URL \url{https://doi.org/10.1017/S0960129518000488}.
\newblock \url{https://arxiv.org/abs/1709.00322}.

\bibitem[Cohen \& Welling(2014)Cohen and Welling]{cohen2014learning}
Cohen, T. and Welling, M.
\newblock Learning the irreducible representations of commutative lie groups.
\newblock In \emph{International Conference on Machine Learning}, 2014.
\newblock URL \url{https://proceedings.mlr.press/v32/cohen14.html}.

\bibitem[Cohen \& Welling(2015)Cohen and Welling]{cohen2015transformation}
Cohen, T. and Welling, M.
\newblock Transformation properties of learned visual representations.
\newblock In \emph{International Conference on Learning Representations}, 2015.
\newblock URL \url{http://arxiv.org/abs/1412.7659}.

\bibitem[de~Haan et~al.(2020)de~Haan, Cohen, and Welling]{de2020natural}
de~Haan, P., Cohen, T., and Welling, M.
\newblock Natural graph networks.
\newblock \emph{Neural Information Processing Systems}, 33:\penalty0
  3636--3646, 2020.
\newblock URL
  \url{https://proceedings.neurips.cc/paper/2020/hash/2517756c5a9be6ac007fe9bb7fb92611-Abstract.html}.

\bibitem[Dittadi et~al.(2021)Dittadi, Tr{\"a}uble, Locatello, Wuthrich,
  Agrawal, Winther, Bauer, and Sch{\"o}lkopf]{dittadi2021transfer}
Dittadi, A., Tr{\"a}uble, F., Locatello, F., Wuthrich, M., Agrawal, V.,
  Winther, O., Bauer, S., and Sch{\"o}lkopf, B.
\newblock On the transfer of disentangled representations in realistic
  settings.
\newblock In \emph{International Conference on Learning Representations}, 2021.
\newblock URL \url{https://openreview.net/forum?id=8VXvj1QNRl1}.

\bibitem[Do \& Tran(2020)Do and Tran]{do2020theory}
Do, K. and Tran, T.
\newblock Theory and evaluation metrics for learning disentangled
  representations.
\newblock In \emph{International Conference on Learning Representations}, 2020.
\newblock URL \url{https://openreview.net/forum?id=HJgK0h4Ywr}.

\bibitem[Duan et~al.(2020)Duan, Matthey, Saraiva, Watters, Burgess, Lerchner,
  and Higgins]{duan2019unsupervised}
Duan, S., Matthey, L., Saraiva, A., Watters, N., Burgess, C., Lerchner, A., and
  Higgins, I.
\newblock Unsupervised model selection for variational disentangled
  representation learning.
\newblock In \emph{International Conference on Learning Representations}, 2020.
\newblock URL \url{https://openreview.net/forum?id=SyxL2TNtvr}.

\bibitem[Dudzik \& Veli{\v{c}}kovi{\'c}(2022)Dudzik and
  Veli{\v{c}}kovi{\'c}]{dudzik2022graph}
Dudzik, A.~J. and Veli{\v{c}}kovi{\'c}, P.
\newblock Graph neural networks are dynamic programmers.
\newblock In \emph{Neural Information Processing Systems}, 2022.
\newblock URL \url{https://openreview.net/forum?id=wu1Za9dY1GY}.

\bibitem[Eastwood \& Williams(2018)Eastwood and
  Williams]{eastwood2018framework}
Eastwood, C. and Williams, C.~K.
\newblock A framework for the quantitative evaluation of disentangled
  representations.
\newblock In \emph{International Conference on Learning Representations}, 2018.
\newblock URL \url{https://openreview.net/forum?id=By-7dz-AZ}.

\bibitem[Fong \& Spivak(2019)Fong and Spivak]{fong2019invitation}
Fong, B. and Spivak, D.~I.
\newblock \emph{An invitation to applied category theory: seven sketches in
  compositionality}.
\newblock Cambridge University Press, 2019.
\newblock URL \url{https://doi.org/10.1017/9781108668804}.
\newblock \url{https://arxiv.org/abs/1803.05316}.

\bibitem[Franz(2002)]{franz2002stochastic}
Franz, U.
\newblock What is stochastic independence?
\newblock In \emph{Non-commutativity, infinite-dimensionality and probability
  at the crossroads}, pp.\  254--274. World Scientific, 2002.
\newblock URL \url{https://doi.org/10.1142/9789812705242_0008}.
\newblock \url{https://arxiv.org/abs/math/0206017}.

\bibitem[Fritz(2020)]{fritz2020synthetic}
Fritz, T.
\newblock A synthetic approach to markov kernels, conditional independence and
  theorems on sufficient statistics.
\newblock \emph{Mathematics}, 370:\penalty0 107239, 2020.
\newblock URL \url{https://doi.org/10.1016/j.aim.2020.107239}.
\newblock \url{https://arxiv.org/abs/1908.07021}.

\bibitem[Gatys et~al.(2016)Gatys, Ecker, and Bethge]{gatys2016image}
Gatys, L.~A., Ecker, A.~S., and Bethge, M.
\newblock Image style transfer using convolutional neural networks.
\newblock In \emph{Computer Vision and Pattern Recognition}, 2016.
\newblock URL \url{https://doi.org/10.1109/CVPR.2016.265}.

\bibitem[Gavranovi{\'c}(2019)]{gavranovic2019compositional}
Gavranovi{\'c}, B.
\newblock Compositional deep learning.
\newblock Master's thesis, University of Zagreb, 2019.
\newblock URL \url{https://arxiv.org/abs/1907.08292}.

\bibitem[Giry(1982)]{giry1982categorical}
Giry, M.
\newblock A categorical approach to probability theory.
\newblock \emph{Categorical Aspects of Topology and Analysis}, pp.\  68--85,
  1982.
\newblock URL \url{https://doi.org/10.1007/BFb0092872}.

\bibitem[Higgins et~al.(2017)Higgins, Matthey, Pal, Burgess, Glorot, Botvinick,
  Mohamed, and Lerchner]{higgins2017betavae}
Higgins, I., Matthey, L., Pal, A., Burgess, C., Glorot, X., Botvinick, M.,
  Mohamed, S., and Lerchner, A.
\newblock beta-{VAE}: Learning basic visual concepts with a constrained
  variational framework.
\newblock In \emph{International Conference on Learning Representations}, 2017.
\newblock URL \url{https://openreview.net/forum?id=Sy2fzU9gl}.

\bibitem[Higgins et~al.(2018)Higgins, Amos, Pfau, Racaniere, Matthey, Rezende,
  and Lerchner]{higgins2018towards}
Higgins, I., Amos, D., Pfau, D., Racaniere, S., Matthey, L., Rezende, D., and
  Lerchner, A.
\newblock Towards a definition of disentangled representations.
\newblock \emph{arXiv preprint arXiv:1812.02230}, 2018.
\newblock URL \url{https://arxiv.org/abs/1812.02230}.

\bibitem[Kumar et~al.(2018)Kumar, Sattigeri, and
  Balakrishnan]{kumar2018variational}
Kumar, A., Sattigeri, P., and Balakrishnan, A.
\newblock Variational inference of disentangled latent concepts from unlabeled
  observations.
\newblock In \emph{International Conference on Learning Representations}, 2018.
\newblock URL \url{https://openreview.net/forum?id=H1kG7GZAW}.

\bibitem[Leinster(2014)]{leinster2014basic}
Leinster, T.
\newblock \emph{Basic category theory}, volume 143.
\newblock Cambridge University Press, 2014.
\newblock URL \url{https://doi.org/10.1017/CBO9781107360068}.
\newblock \url{https://arxiv.org/abs/1612.09375}.

\bibitem[Leinster(2016)]{leinster2016monoidal}
Leinster, T.
\newblock Monoidal categories with projections | the n-category café, 2016.
\newblock URL
  \url{https://golem.ph.utexas.edu/category/2016/08/monoidal_categories_with_proje.html}.

\bibitem[Li et~al.(2020)Li, Murkute, Gyawali, and Wang]{li2020progressive}
Li, Z., Murkute, J.~V., Gyawali, P.~K., and Wang, L.
\newblock Progressive learning and disentanglement of hierarchical
  representations.
\newblock In \emph{International Conference on Learning Representations}, 2020.
\newblock URL \url{https://openreview.net/forum?id=SJxpsxrYPS}.

\bibitem[Liu(2011)]{liu2011learning}
Liu, T.-Y.
\newblock \emph{Learning to Rank for Information Retrieval}.
\newblock Springer, 2011.
\newblock URL \url{https://doi.org/10.1007/978-3-642-14267-3}.

\bibitem[Liu et~al.(2018)Liu, Van De~Weijer, and Bagdanov]{liu2018leveraging}
Liu, X., Van De~Weijer, J., and Bagdanov, A.~D.
\newblock Leveraging unlabeled data for crowd counting by learning to rank.
\newblock In \emph{Computer Vision and Pattern Recognition}, 2018.
\newblock URL \url{https://doi.org/10.1109/CVPR.2018.00799}.

\bibitem[Locatello et~al.(2019)Locatello, Bauer, Lucic, Raetsch, Gelly,
  Sch{\"o}lkopf, and Bachem]{locatello2019challenging}
Locatello, F., Bauer, S., Lucic, M., Raetsch, G., Gelly, S., Sch{\"o}lkopf, B.,
  and Bachem, O.
\newblock Challenging common assumptions in the unsupervised learning of
  disentangled representations.
\newblock In \emph{International Conference on Machine Learning}, 2019.
\newblock URL \url{https://proceedings.mlr.press/v97/locatello19a.html}.

\bibitem[Montero et~al.(2020)Montero, Ludwig, Costa, Malhotra, and
  Bowers]{montero2020role}
Montero, M.~L., Ludwig, C.~J., Costa, R.~P., Malhotra, G., and Bowers, J.
\newblock The role of disentanglement in generalisation.
\newblock In \emph{International Conference on Learning Representations}, 2020.
\newblock URL \url{https://openreview.net/forum?id=qbH974jKUVy}.

\bibitem[Patterson(2017)]{patterson2017knowledge}
Patterson, E.
\newblock Knowledge representation in bicategories of relations.
\newblock \emph{arXiv preprint arXiv:1706.00526}, 2017.
\newblock URL \url{https://arxiv.org/abs/1706.00526}.

\bibitem[Piedeleu \& Zanasi(2023)Piedeleu and Zanasi]{piedeleu2023introduction}
Piedeleu, R. and Zanasi, F.
\newblock An introduction to string diagrams for computer scientists.
\newblock \emph{arXiv preprint arXiv:2305.08768}, 2023.
\newblock URL \url{https://arxiv.org/abs/2305.08768}.

\bibitem[Ridgeway \& Mozer(2018)Ridgeway and Mozer]{ridgeway2018learning}
Ridgeway, K. and Mozer, M.~C.
\newblock Learning deep disentangled embeddings with the f-statistic loss.
\newblock In \emph{Neural Information Processing Systems}, 2018.
\newblock URL
  \url{https://proceedings.neurips.cc/paper/2018/hash/2b24d495052a8ce66358eb576b8912c8-Abstract.html}.

\bibitem[Selinger(2010)]{selinger2010survey}
Selinger, P.
\newblock A survey of graphical languages for monoidal categories.
\newblock In \emph{New structures for physics}, pp.\  289--355. Springer, 2010.
\newblock URL \url{https://doi.org/10.1007/978-3-642-12821-9_4}.
\newblock \url{https://arxiv.org/abs/0908.3347}.

\bibitem[Sepliarskaia et~al.(2019)Sepliarskaia, Kiseleva, and
  de~Rijke]{sepliarskaia2019evaluating}
Sepliarskaia, A., Kiseleva, J., and de~Rijke, M.
\newblock Evaluating disentangled representations.
\newblock \emph{arXiv preprint arXiv:1910.05587}, 2019.
\newblock URL \url{https://arxiv.org/abs/1910.05587}.

\bibitem[Shiebler et~al.(2021)Shiebler, Gavranovi{\'c}, and
  Wilson]{shiebler2021category}
Shiebler, D., Gavranovi{\'c}, B., and Wilson, P.
\newblock Category theory in machine learning.
\newblock \emph{arXiv preprint arXiv:2106.07032}, 2021.
\newblock URL \url{https://arxiv.org/abs/2106.07032}.

\bibitem[Shu et~al.(2020)Shu, Chen, Kumar, Ermon, and Poole]{shu2020weakly}
Shu, R., Chen, Y., Kumar, A., Ermon, S., and Poole, B.
\newblock Weakly supervised disentanglement with guarantees.
\newblock In \emph{International Conference on Learning Representations}, 2020.
\newblock URL \url{https://openreview.net/forum?id=HJgSwyBKvr}.

\bibitem[Suter et~al.(2019)Suter, Miladinovic, Sch{\"o}lkopf, and
  Bauer]{suter2019robustly}
Suter, R., Miladinovic, D., Sch{\"o}lkopf, B., and Bauer, S.
\newblock Robustly disentangled causal mechanisms: Validating deep
  representations for interventional robustness.
\newblock In \emph{International Conference on Machine Learning}, 2019.
\newblock URL \url{http://proceedings.mlr.press/v97/suter19a.html}.

\bibitem[Tokui \& Sato(2022)Tokui and Sato]{tokui2022disentanglement}
Tokui, S. and Sato, I.
\newblock Disentanglement analysis with partial information decomposition.
\newblock In \emph{International Conference on Learning Representations}, 2022.
\newblock URL \url{https://openreview.net/forum?id=pETy-HVvGtt}.

\bibitem[Tr{\"a}uble et~al.(2021)Tr{\"a}uble, Creager, Kilbertus, Locatello,
  Dittadi, Goyal, Sch{\"o}lkopf, and Bauer]{trauble2021disentangled}
Tr{\"a}uble, F., Creager, E., Kilbertus, N., Locatello, F., Dittadi, A., Goyal,
  A., Sch{\"o}lkopf, B., and Bauer, S.
\newblock On disentangled representations learned from correlated data.
\newblock In \emph{International Conference on Machine Learning}, 2021.
\newblock URL \url{https://proceedings.mlr.press/v139/trauble21a.html}.

\bibitem[Wang et~al.(2021)Wang, Yue, Huang, Sun, and Zhang]{wang2021self}
Wang, T., Yue, Z., Huang, J., Sun, Q., and Zhang, H.
\newblock Self-supervised learning disentangled group representation as
  feature.
\newblock In \emph{Neural Information Processing Systems}, 2021.
\newblock URL \url{https://openreview.net/forum?id=RQfcckT1M_4}.

\end{thebibliography}
\bibliographystyle{icml2023}

\clearpage
\appendix
\section{Additional Remarks}


\subsection{Diagram}
\label{app:diagram}

We frequently use commutative diagrams \citep{awodey2006category} and string diagrams \citep{selinger2010survey, piedeleu2023introduction} as graphical calculus.

In a commutative diagram for a category, nodes are objects, arrows are morphisms, and paths are compositions of morphisms.
Any morphisms with the same domains and codomains are equal, i.e., any two paths starting from $A$ and ending with $B$ result in the same morphism.

In a string diagram for a symmetric monoidal category, rectangles are morphisms (from bottom to top), lines without rectangles are identity morphisms.
Juxtaposition of two morphisms means the monoidal product, and cross means the braiding.
Domains and codomains of morphisms are often omitted.


\subsection{Compactness}
\label{app:compact}

Note that there could be two interpretations of compactness.
A non-compact encoder can mean:
\begin{enumerate}[(a), leftmargin=6mm]
\item We can recover $Y_j$ from $Z_i$; or
\item $Z_i$ itself is a product $Z_{i1} \times Z_{i2}$.
\end{enumerate}
We argue that (a) is what we want to avoid, while (b) is more or less harmless.
For example, we can decompose $\R^{10}$ into $\R^2 \times \R^3 \times \R^5$, where each component again can be decomposed into smaller parts.
However, sometimes this is beneficial: while embedding a cycle into a vector space, $\R^2$ may be a better choice than $\R$ because the embedding can be continuous.
In this work, we do not pay much attention to whether each code $Z_i$ is ``minimal''.


\subsection{Functorial Semantics}

Can we formulate modularity in terms of functors and natural transformations?
The answer is yes, because the product, as a limit, can be defined via functors in the first place.
Here, we only give an alternative definition of \cref{dis:mproduct}:

\begin{disentanglementsubalt}{dis:mproduct}
$m$ is a natural transformation between functors from a \catterm{discrete category}.
\end{disentanglementsubalt}

\begin{equation}
\begin{tikzcd}[column sep=3em, row sep=.3em]
&
\pone
\arrow[ldd, functor]
\arrow[rdd, functor]
\\
&
\ptwo
\arrow[ldd, functor]
\arrow[rdd, functor]
\\[-.4em]
Y_1
\arrow[rr, "m_{1,1} \defeq m_\pone"']
&&
Z_1
\\
Y_2
\arrow[rr, "m_{2,2} \defeq m_\ptwo"']
&&
Z_2
\end{tikzcd}
\end{equation}
It shows that a modular encoder $m$ is merely a collection of component morphisms $m_{i,i}: Y_i \to Z_i$.
Nothing more, nothing less.


\subsection{Commutativity and Irreducibility}

\citet{cohen2014learning} in their seminal paper used the \emph{irreducible} representations of compact \emph{commutative} Lie groups to define and study disentangled representations, while \citet{higgins2018towards} used the \emph{direct product} of groups.
Here, we briefly remark on the product, commutativity, and irreducibility.

First, let us keep it simple and only consider \emph{unital magma} --- a set with a unital binary operation.
If we have two unital magmas $(M, \compL_M, e_M)$ and $(N, \compL_N, e_N)$, we can define their product, denoted by $P = M \times N$, as the Cartesian product of their underlying sets equipped with a binary operation $\compL_P: (M \times N) \times (M \times N) \to (M \times N)$ given by
\begin{equation}
(m_1, n_1) \compL_P (m_2, n_2) \defeq (m_1 \compL_M m_2, n_1 \compL_N n_2),
\end{equation}
whose unit is $e_P \defeq (e_M, e_N)$.
We can see that the product is also a unital magma.

Then, we can find that every element $(m, n)$ in the product $P$ can be decomposed in two ways:
\begin{equation}
\begin{aligned}
  & (m, n)
\\
= & (m \compL_M e_M, e_N \compL_N n) = (m, e_N) \compL_P (e_M, n)
\\
= & (e_M \compL_M m, n \compL_N e_N) = (e_M, n) \compL_P (m, e_N),
\end{aligned}
\end{equation}
which can be expressed in string diagrams:
\begin{equation}
\begin{tikzpicture}
\node (A) [matrix, inner sep=0pt] {
  \draw (0, 0) -- (0, 2);
  \draw (1, 0) -- (1, 2);
  \node at (0, 1) [style=morphism, minimum width=.5cm] {$m$};
  \node at (1, 1) [style=morphism, minimum width=.5cm] {$n$};
  \\
};
\node (B) [matrix, inner sep=0pt, right=2cm of A, anchor=center] {
  \draw (0, 0) -- (0, 2);
  \draw (1, 0) -- (1, 2);
  \node at (0, 1.5) [style=morphism, minimum width=.5cm] {$m$};
  \node at (1,  .5) [style=morphism, minimum width=.5cm] {$n$};
  \\
};
\node (C) [matrix, inner sep=0pt, right=2cm of B, anchor=center] {
  \draw (0, 0) -- (0, 2);
  \draw (1, 0) -- (1, 2);
  \node at (0,  .5) [style=morphism, minimum width=.5cm] {$m$};
  \node at (1, 1.5) [style=morphism, minimum width=.5cm] {$n$};
  \\
};
\node at ($(A.east)!0.5!(B.west)$) {$=$};
\node at ($(B.east)!0.5!(C.west)$) {$=$};
\end{tikzpicture}
\end{equation}
We can identify $(m, e_N)$ as $m$ and $(e_M, n)$ as $n$ because of the unit magma isomorphisms:
\begin{align}
(M, \compL_M, e_M) & \iso (M \times \set{e_N}, \eval{\compL_P}{M \times \set{e_N}}, e_P),
\\
(N, \compL_N, e_N) & \iso (\set{e_M} \times N, \eval{\compL_P}{\set{e_M} \times N}, e_P).
\end{align}
From this perspective, as long as we can define a serial combination $\compL$ and its unit $e$ for each component, the product operation $\times$ allows us to combine elements from different components in parallel commutatively.
We can deal with one component at a time, and the order of the components does not matter.
However, note that the serial combination within a component may not be commutative, such as the 3D rotations $\mathrm{SO}(3)$ \citep{cohen2015transformation, higgins2018towards}.

This property may inspire us to \say{discover} disentangled components from observational data using commutativity:
we can find components such that elements from the same component are closed under composition, and elements from different components are commutative.

Such a decomposition may not be unique.
For example, consider $\R^2$ with addition $+$ (as a unital magma, a monoid, or a group).
$A = \set{(a, 0) \given a \in \R}$, $B = \set{(0, b) \given b \in \R}$, and $C = \set{(c, c) \given c \in \R}$ are all subalgebras of $\R^2$, and both $A \times B$ and $A \times C$ are isomorphic to $\R^2$ via the addition.

Learning the (potentially product) algebraic structure from data and determining an appropriate decomposition based on commutativity is an interesting research direction.

Besides, it is worth noting that \citet{cohen2014learning} identified a connection between irreducible representations and disentanglement, which is not covered in this work.
Furthermore, \citet{cohen2015transformation} made an insightful observation that irreducibility is also linked to the statistical dependency structure of the representation.
Using tools such as functor categories and Markov categories, we may obtain more fruitful results on the connection between algebraic and statistical properties of disentanglement.





\subsection{Probability}

Note that $\cMeas$ has finite products: $(A, \Sigma_A) \times (B, \Sigma_B) \defeq (A \times B, \Sigma_A \times \Sigma_B)$, where $\Sigma_A \times \Sigma_B$ is the coarsest $\sigma$-algebra such that two projections are measurable.

A useful construction is the category of probability measures and measure-preserving functions, which can be defined based on the concept of the \catterm{comma category}:
\begin{equation}
\cProb \defeq \cOne \incl \cStoch \comma \cMeas \to \cStoch.
\end{equation}

Concretely, $\cOne \incl \cStoch$ is the inclusion functor, and the functor $\cMeas \to \cStoch$ sends a measurable set $A$ to itself and a measurable function $f$ to its pushforward $f_*$.

$\cProb$ is a category whose objects are (isomorphic to) probability measures $(A, 1 \xto{p_A} PA)$, and morphisms $p_A \to p_B$ are measure-preserving functions $f: A \to B$ such that $p_B = f_* p_A$.
This category will be important when characterizing the metrics based on entropy and mutual information \citep{baez2011characterization}.

$\cMeas$, $\cStoch$, and $\cProb$ can be illustrated as follows:
\begin{equation}
\begin{adjustbox}{width=.85\linewidth}
\begin{tikzcd}[column sep=5em, row sep=6em, labels={description, inner sep=0pt}]
1
\arrow[rd, "p(A)"', pos=0.3, \red]
\arrow[rrd, "p(B)"', pos=0.31, \red]
\arrow[rrrd, "p(C)", pos=0.3, \red]
\arrow[r, "p(A)", rightsquigarrow, \green]
&
A
\arrow[rd, "p(B|A)", pos=0.75, crossing over, \blue]
\arrow[rrd, "p(C|A)", pos=0.5, \blue]
\arrow[r, "p(B|A)", rightsquigarrow, \green]
\arrow[rr, "p(C|A)", bend left, rightsquigarrow, \green]
&
B
\arrow[rd, "p(C|B)", pos=0.25, \blue]
\arrow[r, "p(C|B)", rightsquigarrow, \green]
&
C
\\
&
PA
\arrow[r, "f_*"', \yellow]
\arrow[rr, "g_* \circ f_*", bend right, \yellow]
&
PB
\arrow[r, "g_*"', \yellow]
\arrow[rd, "Pp(C|B)", crossing over]
&
PC
\\
&&&
PPC
\arrow[u, "\mu_C"]
\end{tikzcd}
\end{adjustbox}
\end{equation}

All arrows are morphisms in $\cMeas$;
\textcolor{\red}{red} arrows are \emph{objects} in $\cProb$;
\textcolor{\yellow}{yellow} arrows are morphisms in $\cProb$;
\textcolor{\green}{green} squiggly arrows represent morphisms in $\cStoch$, which are the same as \textcolor{\red}{red} or \textcolor{\blue}{blue} arrows.

The commutativity of \textcolor{\red}{red} and \textcolor{\yellow}{yellow} arrows indicates the composition of measure-preserving functions in $\cProb$; the commutativity of \textcolor{\blue}{blue} and black arrows indicates the Kleisli composition of stochastic maps in $\cStoch$.

As a side note, we can also use this construction to define the category of relations and relation-preserving functions \citep[Secion 3.3]{herrlich1990abstract}:
\begin{equation}
\cOne \incl \cRel \comma \cSet \to \cRel.
\end{equation}


\subsection{Markov Category}

A Markov category \citep{fritz2020synthetic} is a symmetric monoidal category in which every object $A$ is equipped with a \catterm{commutative comonoid} structure given by a \catterm{comultiplication} $\cpy_A: A \to A \otimes A$ and a \catterm{counit} $\del_A: A \to I$, depicted in string diagrams as
\begin{equation}
\begin{tikzpicture}
  \begin{pgfonlayer}{nodelayer}
    \node at (0, 0) {$=$};
    \node at (-.8, 0) {$\cpy_A$};
    \node (1) at (.8, -.3) {};
    \node (2) at (.8, 0) [style=diagonal] {};
    \node (3) at (.4, .3) {};
    \node (4) at (1.2, .3) {};
  \end{pgfonlayer}
  \begin{pgfonlayer}{edgelayer}
    \draw (1.center) to (2.center);
    \draw [out=180, in=-90] (2.center) to (3.center);
    \draw [out=0, in=-90] (2.center) to (4.center);
  \end{pgfonlayer}
\end{tikzpicture}
\hspace{3em}
\begin{tikzpicture}
 \begin{pgfonlayer}{nodelayer}
    \node at (0, 0) {$=$};
    \node at (-.8, 0) {$\del_A$};
    \node (1) at (.8, -.3) {};
    \node (2) at (.8, .3) [style=diagonal] {};
  \end{pgfonlayer}
  \begin{pgfonlayer}{edgelayer}
    \draw (1.center) to (2.center);
  \end{pgfonlayer}
\end{tikzpicture}
\end{equation}
and \omitted{satisfying some compatibility conditions}.



\subsection{Conditional Independence}

\begin{definition}[Conditional independence {\citep[Definition 12.16]{fritz2020synthetic}}]
 A morphism $f: A \to X \otimes W  \otimes Y$ displays the \catterm{conditional independence} $X \ind Y \condout W \condin A$ if there are morphisms $g: A \to W$, $h: A \otimes W \to X$ and $k: W \otimes A \to Y$ such that
\begin{equation}
\begin{tikzpicture}
 \begin{pgfonlayer}{nodelayer}
  \node at (0, 0) {$=$};
  \node (1)  at (-2  , -1.3) {};
  \node (2)  at (-2  ,  0  ) {};
  \node (3)  at (-3  ,  0  ) {};
  \node (4)  at (-3  ,  1.3) {};
  \node (5)  at (-2  ,  0  ) {};
  \node (6)  at (-2  ,  1.3) {};
  \node (7)  at (-1  ,  0  ) {};
  \node (8)  at (-1  ,  1.3) {};
  \node (9)  at ( 2  , -1.3) {};
  \node (10) at ( 2  , - .9) [style=diagonal] {};
  \node (11) at ( 2  ,  .3) [style=diagonal] {};
  \node (12) at (  .8,  .9) {};
  \node (13) at ( 3.2,  .9) {};
  \node (14) at ( 1.2,  .9) {};
  \node (15) at ( 2.8,  .9) {};
  \node (16) at ( 1  ,  .9) {};
  \node (17) at ( 1  ,  1.3) {};
  \node (18) at ( 3  ,  .9) {};
  \node (19) at ( 3  ,  1.3) {};
  \node (20) at ( 2  ,  1.3) {};
  \node at (-2, -1.5) {$A$};
  \node at (-3,  1.5) {$X$};
  \node at (-2,  1.5) {$W$};
  \node at (-1,  1.5) {$Y$};
  \node at ( 2, -1.5) {$A$};
  \node at ( 1,  1.5) {$X$};
  \node at ( 2,  1.5) {$W$};
  \node at ( 3,  1.5) {$Y$};
  \node at (-2, 0) [style=morphism, minimum width=2.8cm] {$f$};
  \node at (2, -.3) [style=morphism, minimum width=.8cm] {$g$};
  \node at (1,  .9) [style=morphism, minimum width=.8cm] {$h$};
  \node at (3,  .9) [style=morphism, minimum width=.8cm] {$k$};
 \end{pgfonlayer}
 \begin{pgfonlayer}{edgelayer}
  \draw (1.center) to (2.center);
  \draw (3.center) to (4.center);
  \draw (5.center) to (6.center);
  \draw (7.center) to (8.center);
  \draw (9.center) to (10.center);
  \draw (10.center) to (11.center);
  \draw [out=180, in=-90, looseness=1.5] (10.center) to (12.center);
  \draw [out=0, in=-90, looseness=1.5] (10.center) to (13.center);
  \draw [out=180, in=-90, looseness=1.5] (11.center) to (14.center);
  \draw [out=0, in=-90, looseness=1.5] (11.center) to (15.center);
  \draw (16.center) to (17.center);
  \draw (18.center) to (19.center);
  \draw (11.center) to (20.center);
 \end{pgfonlayer}
\end{tikzpicture}
\end{equation}
\end{definition}
Two special cases are when $A = I$ we have $X \ind Y \condout W$ and when $W = I$ we have $X \ind Y \condin A$.

Another way to define the modularity of a stochastic encoder is as follows, which relies on the existence of some other morphisms (cf.~\cref{dis:const}):

\begin{disentanglementsub}
\label{dis:del}
$\forall i \in [1..N].\; m_i = m_{i,i} \otimes \del_{Y_{\setminus i}}$.
\begin{equation*}
\begin{tikzpicture}
  \begin{pgfonlayer}{nodelayer}
    \node at (0, 0) {$=$};
    \node (1) at (-2, -.4) {};
    \node (2) at (-2,  .4) {};
    \node (3) at (-1, -.4) {};
    \node (4) at (-1,   0) {};
    \node (5) at ( 1, -.4) {};
    \node (6) at ( 1,  .4) {};
    \node (7) at ( 2, -.4) {};
    \node (8) at ( 2,  .4) [style=diagonal] {};
    \node at (-2, -.6) {$Y_i$};
    \node at (-1, -.6) {$Y_{\setminus i}$};
    \node at (-2,  .6) {$Z_i$};
    \node at ( 1, -.6) {$Y_i$};
    \node at ( 2, -.6) {$Y_{\setminus i}$};
    \node at ( 1,  .6) {$Z_i$};
    \node at (-1.5, 0) [style=morphism, minimum width=1.6cm] {$m_i$};
    \node at ( 1  , 0) [style=morphism, minimum width=.8cm] {$m_{i,i}$};
  \end{pgfonlayer}
  \begin{pgfonlayer}{edgelayer}
    \draw (1.center) to (2.center);
    \draw (3.center) to (4.center);
    \draw (5.center) to (6.center);
    \draw (7.center) to (8.center);
  \end{pgfonlayer}
\end{tikzpicture}
\end{equation*}
\end{disentanglementsub}

This condition was also studied in \citet[Proposition 6.9]{cho2019disintegration}.
We can see that it is stronger than \cref{dis:mindin}:

\begin{theorem}
$\cref{dis:del} \limp \cref{dis:mindin}$.
\end{theorem}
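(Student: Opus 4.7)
My plan is to treat \cref{dis:del} as a rewrite rule that trivializes the $Y_{\setminus i}$-port of $m_i$, and then to chase both sides of the \cref{dis:mindin} equation down to the same normal form using the standard axioms of a Markov category.

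Concretely, \cref{dis:del} gives $m_i = m_{i,i} \otimes \del_{Y_{\setminus i}}$, so the second wire entering $m_i$ is always capped by a delete while the first wire is run through $m_{i,i}$ to produce $Z_i$. I would first substitute this identity on the LHS of \cref{dis:mindin}: doing so terminates the $Y_{\setminus i}$-input of $m_i$, which on the LHS is one of the two outputs of $\cpy_{Y_{\setminus i}}$ applied to $n_i$. The counit law $(\id_A \otimes \del_A) \compL \cpy_A = \id_A$ then collapses that copy, leaving a single $n_i$ that produces the $Y_{\setminus i}$ output. The LHS thus reduces to: copy $Y_i$, send one copy through $m_{i,i}$ to get $Z_i$, and send the other through $n_i$ to get $Y_{\setminus i}$.

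I would next apply the same substitution on the RHS of \cref{dis:mindin}, where the $Y_{\setminus i}$-input of $m_i$ is fed directly by $n_i$ applied to one copy of $Y_i$. Naturality of $\del$ (every morphism is deletable in a semicartesian, and hence Markov, category) slides the newly introduced $\del_{Y_{\setminus i}}$ past $n_i$ to give $\del_{Y_i}$, and the counit law then collapses the resulting three-way copy of $Y_i$ down to a two-way copy. The surviving diagram is identical to the simplified LHS. Since both sides agree for every probe $n_i: Y_i \to Y_{\setminus i}$, \cref{dis:mindin} holds.

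The hard part is purely diagrammatic bookkeeping: the two sides start with different numbers of $\cpy$ and $n_i$ instances, so I must track the cascade of deletions carefully to confirm that the surviving fragments coincide literally, not just up to some further rearrangement. All moves required (naturality of $\del$ and the commutative-comonoid laws of associativity, commutativity, and counitality of $\cpy$) are standard in any Markov category, so no nontrivial calculation is needed beyond the chase itself.
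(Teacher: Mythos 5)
Your argument is correct: substituting $m_i = m_{i,i} \otimes \del_{Y_{\setminus i}}$ into both sides of the equation in \cref{dis:mindin}, then applying the counit law to collapse the capped leg of each copy and the naturality of $\del$ to turn $\del_{Y_{\setminus i}} \compL n_i$ into $\del_{Y_i}$, reduces both sides to the same normal form $(m_{i,i} \otimes n_i) \compL \cpy_{Y_i}$. The paper states this theorem without giving a proof, so there is nothing to compare against, but your diagram chase is evidently the intended one and all the moves you invoke are valid Markov-category axioms. One small wording slip: every Markov category is semicartesian (not ``semicartesian, and hence Markov''), and that is the reason $\del$ is natural; the mathematical content of that step is unaffected.
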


However, it is not yet clear if they are equivalent.


\clearpage
\section{Example}
\label{app:example}

Let us start from the category $\cSet$.
Consider the nonempty multiset monad $N$ in $\cSet$, which sends a set $A$ to $\N^A \setminus \varnothing$.
For example, the set $\set{a, b}$ is sent to
\begin{equation*}
\set{\set{(a, 1)}, \set{(a, 2)}, \dots, \set{(b, 1)}, \dots, \set{(a, 1), (b, 1)}, \dots}
\end{equation*}

The Kleisli category $\cSet_N$ of this monad consists of sets and multiset functions.
A multiset function $f: A \klto B$ outputs how many ways to get a target $b \in B$ from a source $a \in A$.
The composition of multiset functions is defined by the multiplication and sum of natural numbers.
This category is a Markov category.

A Markov category $\cC$ has a cartesian subcategory $\cC_{\det}$ of deterministic morphisms.
Given a small category $\cS$, the subcategory $[\cS, \cC]_{\det}$ of the functor category $[\cS, \cC]$, which consists of functors of the form $\cS \to \cC_{\det} \incl \cC$, is again a Markov category \citep[Section 7, notation slightly modified]{fritz2020synthetic}.
The category $[\cS, \cC]_{\det}$ contains deterministic diagrams of shape $\cS$ and stochastic maps between them that preserve the shape.

The set of natural numbers can be considered a single-object category $(\pone, \N, +, 0)$ with the numbers as morphisms and the addition as the composition.
The identity morphism $\id_\pone$ is the number $0$.

Based on these, let us consider the category $[\N, \cSet_N]_{\det}$.
This category contains sets equipped with endofunctions indexed by natural numbers as objects and multiset functions between these sets that preserve their endofunctions as morphisms.
A natural transformation to a constant functor (which maps all morphisms to the identity morphism) in this category means that no matter how the input changes with time, the count is invariant.
An example is shown bellow:

\begin{equation*}
\begin{tikzpicture}
\node (A) [matrix] {
  \node [rectangle, draw=black, thick, minimum size=2cm] {};
  \draw [fill=\red] (-.1, .6) circle (.3);
  \draw [fill=\green] (-.8, -.8) rectangle (.2, .1);
  \draw [fill=\red] (-.6, -.9) rectangle (-.2, -.5);
  \draw [fill=\blue] (.4, -.1) -- (0, -.9) -- (.8, -.9) -- cycle;
  \\
};
\node (B) [matrix, below=1.5cm of A, anchor=center] {
  \node [rectangle, draw=black, thick, minimum size=2cm] {};
  \draw [fill=\red] (-.1, .2) circle (.3);
  \draw [fill=\green] (-.8, -.8) rectangle (.2, .1);
  \draw [fill=\red] (-.6, -.9) rectangle (-.2, -.5);
  \draw [fill=\blue] (.4, -.1) -- (0, -.9) -- (.8, -.9) -- cycle;
  \\
};
\node (C) [matrix, below=1.5cm of B, anchor=center] {
  \node [rectangle, draw=black, thick, minimum size=2cm] {};
  \draw [fill=\red] (-.1, -.5) circle (.3);
  \draw [fill=\green] (-.8, -.8) rectangle (.2, .1);
  \draw [fill=\red] (-.6, -.9) rectangle (-.2, -.5);
  \draw [fill=\blue] (.4, -.1) -- (0, -.9) -- (.8, -.9) -- cycle;
  \\
};
\node (D) [right=.5cm of A] {$\set{(\text{red}, 2), (\text{green}, 1), (\text{blue}, 1)}$};
\node (E) [right=.5cm of B] {$\set{(\text{red}, 2), (\text{green}, 1), (\text{blue}, 1)}$};
\node (F) [right=.5cm of C] {$\set{(\text{red}, 2), (\text{green}, 1), (\text{blue}, 1)}$};
\path[commutative diagrams/.cd, every arrow, every label]
  (A) edge [commutative diagrams/mapsto] (B)
  (B) edge [commutative diagrams/mapsto] (C)
  (D) edge [commutative diagrams/mapsto] (E)
  (E) edge [commutative diagrams/mapsto] (F)
  (A) edge [commutative diagrams/mapsto] (D)
  (B) edge [commutative diagrams/mapsto] (E)
  (C) edge [commutative diagrams/mapsto] (F);
\end{tikzpicture}
\end{equation*}

If we want to characterize more complex behavior, we may simply change the source category $\N$ and define a proper category (possibly with a product structure) that encodes our requirements.
The extension is left for future work.


\section{Proofs}

\begin{proof}[\cref{prop:m}]
\begin{equation}
\begin{tikzcd}[row sep=3em]
Y_1
\arrow[d, "m_{1,1}"']
&
Y_1 \times Y_2
\arrow[l, "p_1"']
\arrow[r, "p_2"]
\arrow[d, "m_{1,1} \times m_{2,2}" description, unique morphism]
&
Y_2
\arrow[d, "m_{2,2}"]
\\
Z_1
&
Z_1 \times Z_2
\arrow[l, "p_1"]
\arrow[r, "p_2"']
&
Z_2
\end{tikzcd}
\end{equation}
\end{proof}


\begin{proof}[\cref{prop:mii}]
\begin{equation}
\begin{tikzcd}[row sep=2em]
Y_1
\arrow[d, "\id_{Y_1}"']
&
Y_1 \times 1
\arrow[l, "p_1"']
\arrow[d, "\id_{Y_1} \times y_2"]
\\
Y_1
\arrow[d, "{m_{1,1}}"']
\arrow[ru, "\iso" description]
&
Y_1 \times Y_2
\arrow[l, "p_1"]
\arrow[d, "m"]
\\
Z_1
&
Z_1 \times Z_2
\arrow[l, "p_1"] 
\end{tikzcd}
\end{equation}
\end{proof}


\cref{thm:exp} can be proven using the following lemma:

\begin{lemma}
\label{lem:exponential}
Let $f: A \times B \to C$ be a morphism from a product and $\widehat{f}: B \to C^A$ its exponential transpose.
Then, there exists a morphism $f': A \to C$ such that $f = f' \compL p_1$ if and only if the exponential transpose $\widehat{f}$ is a constant morphism.
\end{lemma}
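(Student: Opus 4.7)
The plan is to prove the lemma by invoking the natural bijection $\widehat{(-)}: \Hom(A \times B, C) \xto{\iso} \Hom(B, C^A)$ furnished by the cartesian closed structure. The guiding observation is that constant morphisms $B \to C^A$ are exactly those factoring through the terminal object $1$ via $e_B: B \to 1$, and by naturality of currying in the variable $B$, these correspond across the bijection to morphisms $A \times B \to C$ that factor through $p_1: A \times B \to A$ (equivalently, through $\id_A \times e_B: A \times B \to A \times 1 \iso A$).

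For the forward direction, I would assume $f = f' \compL p_1$ and decompose $p_1 = p_1^{A,1} \compL (\id_A \times e_B)$, where $p_1^{A,1}: A \times 1 \to A$ is the first projection out of $A \times 1$. Then $f = (f' \compL p_1^{A,1}) \compL (\id_A \times e_B)$, and applying naturality of currying to $e_B: B \to 1$ yields $\widehat{f} = \widehat{f' \compL p_1^{A,1}} \compL e_B$, which factors through $1$ and is therefore constant. For the reverse direction, I would assume $\widehat{f} = c \compL e_B$ for some $c: 1 \to C^A$, uncurry $c$ to obtain $f'': A \times 1 \to C$ with $\widehat{f''} = c$, and define $f' \defeq f'' \compL \sigma^{-1}: A \to C$ along the canonical isomorphism $\sigma: A \xto{\iso} A \times 1$, so that $f'' = f' \compL p_1^{A,1}$. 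Running the above computation in reverse via naturality then yields $f = f'' \compL (\id_A \times e_B) = f' \compL p_1$.

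I do not expect a serious obstacle here: the whole argument is a diagram chase instantiating the adjunction $(-) \times A \dashv (-)^A$, and in fact it is essentially the content of the commutative diagram already drawn in the proof of \cref{thm:exp}. The only care required is bookkeeping around the canonical isomorphism $A \iso A \times 1$ and the direction of currying, so that the two constructions are mutually inverse. One could alternatively avoid this bookkeeping entirely by packaging the argument as the statement that $\Hom(A \times -, C)$ and $\Hom(-, C^A)$ are naturally isomorphic as functors $\cC^{op} \to \cSet$, and then noting that both sides take $e_B$ to the same inclusion of \emph{morphisms factoring through $1$}.
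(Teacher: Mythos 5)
Your proof is correct and takes essentially the same route as the paper's: both hinge on the currying bijection $\Hom(A \times B, C) \iso \Hom(B, C^A)$ being natural in $B$, so that constancy of $\widehat{f}$ (factoring through $1$ via $e_B$) corresponds exactly to $f$ factoring through $p_1$ (via $\id_A \times e_B$); the paper merely unpacks this naturality as an explicit diagram chase with the evaluation morphism $\epsilon_A$ in place of your direct appeal to the adjunction. The only slip is notational: with $\sigma: A \xto{\iso} A \times 1$ and the paper's composition order, your $f'$ should be $f'' \compL \sigma$ rather than $f'' \compL \sigma^{-1}$ (so that $f' \compL p_1^{A,1} = f''$), but this does not affect the argument.
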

\begin{proof}
Diagram chase:
\begin{equation}
\begin{tikzcd}[column sep=2.5em, row sep=1.5em, arrows={labels={inner sep=1pt}},
execute at end picture={
\draw [line width=.5pt, rounded corners=15, \red]
([xshift=20pt, yshift=-5pt] AB.center) --
([xshift=-35pt, yshift=-5pt] AAC.center) --
([xshift=0pt, yshift=7pt] C.center) --
cycle;
\draw [line width=.5pt, rounded corners=20, \green]
([xshift=30pt, yshift=5pt] B.center) --
([xshift=-0pt, yshift=-5pt] 1.center) --
([xshift=-15pt, yshift=5pt] AC.center) --
cycle;
\draw [line width=.5pt, rounded corners=10, \blue]
([xshift=15pt, yshift=-5pt] A1.center) --
([xshift=15pt, yshift=-20pt] 1.center) --
([xshift=-3pt, yshift=-10pt] AC.center) --
([xshift=-3pt, yshift=5pt] AAC.center) --
cycle;
\draw [line width=.5pt, rounded corners=10, \yellow]
([xshift=5pt, yshift=-5pt] A.center) --
([xshift=0pt, yshift=-8pt] A1.center) --
([xshift=-25pt, yshift=3pt] AAC.center) --
([xshift=5pt, yshift=12pt] C.center) --
cycle;
}]
&&
|[alias=1]|
1
\arrow[rd, "\widehat{f'}" \green]
\\
|[alias=B]|
B
\arrow[rrr, "\widehat{f}"', unique morphism]
\arrow[rru, unique morphism]
&&&
|[alias=AC]|
C^A
\\[-1.5em]
&&
|[alias=A1]|
A \times 1
\arrow[uu, "p_2"']
\arrow[rdd, "\id_A \times \widehat{f'}" \blue, pos=.1]
\arrow[dddd, "p_1", pos=.3]
\\[-1.5em]
&
|[alias=A]|
A
\arrow[ru, "\iso", leftrightarrow]
\arrow[dd, "f'"' \yellow, pos=.6]
\\[-1.5em]
|[alias=AB]|
A \times B
\arrow[rd, "f"']
\arrow[rrr, "\id_A \times \widehat{f}" \red]
\arrow[uuu, "p_2"]
\arrow[ru, "p_1"]
\arrow[ddd, "p_1"']
&&&
|[alias=AAC]|
A \times C^A
\arrow[lld, "\epsilon_A"]
\arrow[uuu, "p_2"']
\arrow[ddd, "p_1"]
\\[.5em]
&
|[alias=C]|
C
\\[-2em]
&&
A
\arrow[rd, "\id_A"]
\\
A
\arrow[rru, "\id_A"]
\arrow[rrr, "\id_A"]
&&&
A
\end{tikzcd}
\end{equation}

We need to use the following commutative diagrams:
\begin{enumerate*}[(i)]
\item \textcolor{\red}{red}: the universal property of the exponential object $C^A$ and the evaluation morphism $\epsilon_A$;
\item \textcolor{\green}{green}: the constant morphism $\widehat{f}$, which factors through the terminal object $1$ and defines the morphism $\widehat{f'}$;
\item \textcolor{\blue}{blue}: the product morphism $\id_A \times \widehat{f'}$; and
\item \textcolor{\yellow}{yellow}: the definition of $f'$.
\end{enumerate*}

It is easy to prove $\widehat{f}: B \to C^A$ is a constant morphism if $f = f' \compL p_1$.
Suppose $\widehat{f}: B \to C^A$ is a constant morphism, so it factors through the terminal object $1$.
We denote the morphism by $\widehat{f'}: 1 \to C^A$.
We can define $f': A \to C$ as $\epsilon_A \compL (\id_A \times \widehat{f'})$.
To prove $f = f' \compL p_1$, i.e., $f = \epsilon_A \compL (\id_A \times \widehat{f'}) \compL p_1$, we only need to show $\id_A \times \widehat{f} = (\id_A \times \widehat{f'}) \compL (\id_A \times e_B)$.
This triangle commutes because it is simply a product of the identity morphism $\id_A$ and the constant morphism $\widehat{f}$.
\end{proof}


Alternatively, we can also characterize product morphisms using \catterm{pullback}.
Concretely, let $Y \times_{Y_i} Y$ be the pullback of the projections $p_i: Y \to Y_i$ and $\pi_1, \pi_2: Y \times_{Y_i} Y \to Y$ be the \catterm{pullback projections}.
In the category $\cSet$ of sets, the pullback $Y \times_{Y_i} Y = \set{(y, y') \in Y \times Y \given y_i = y'_i}$ is the set of pairs of factors whose $i$-th components are the same.
Then, $m$ is a product morphism if and only if $m_i \compL \pi_1 = m_i \compL \pi_2$, i.e., $m_i(y_i, y_{\setminus i}) = m_i(y_i, y'_{\setminus i})$.
This can be proven using the following lemma:

\begin{lemma}
\label{lem:pullback}
Let $f: A \times B \to C$ be a morphism from a product and $(A \times B) \times_A (A \times B)$ be the pullback of the projections $p_1: A \times B \to A$ with two pullback projections $\pi_1, \pi_2: (A \times B) \times_A (A \times B) \to A \times B$.
Then, there exists a morphism $f': A \to C$ such that $f = f' \compL p_1$ if and only if $f \compL \pi_1 = f \compL \pi_2$.
\end{lemma}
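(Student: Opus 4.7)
The proof follows the same template as that of \cref{lem:exponential}, replacing the cartesian closure adjunction by the universal property of the pullback. I would organize it as a short diagram chase in two directions.

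For the forward direction, assume $f = f' \compL p_1$ for some $f': A \to C$. The pullback square gives the defining equation $p_1 \compL \pi_1 = p_1 \compL \pi_2$, and precomposing this by $f'$ yields $f \compL \pi_1 = f' \compL p_1 \compL \pi_1 = f' \compL p_1 \compL \pi_2 = f \compL \pi_2$. Nothing beyond the pullback equation is needed here.

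For the reverse direction, the plan is to exhibit a section $s: A \to A \times B$ of $p_1$ (so that $p_1 \compL s = \id_A$) and set $f' \defeq f \compL s$. When $B$ admits a global element $b: 1 \to B$, such a section is given by $s = \angles{\id_A, b \compL e_A}$. To verify $f' \compL p_1 = f$, I would construct a mediator $\phi: A \times B \to (A \times B) \times_A (A \times B)$ from the universal property of the pullback applied to the parallel pair $(\id_{A \times B}, s \compL p_1)$: these two morphisms agree after postcomposition with $p_1$, since $p_1 \compL s \compL p_1 = p_1$, so there is a unique $\phi$ with $\pi_1 \compL \phi = \id_{A \times B}$ and $\pi_2 \compL \phi = s \compL p_1$. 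Precomposing the hypothesis $f \compL \pi_1 = f \compL \pi_2$ by $\phi$ then gives $f = f \compL \pi_1 \compL \phi = f \compL \pi_2 \compL \phi = f \compL s \compL p_1 = f' \compL p_1$.

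The main obstacle is the degenerate case in which no section of $p_1$ exists---for instance when $B$ has no global element (e.g.~$B = \varnothing$ in $\cSet$). There $A \times B$ collapses to a strict initial object, $f$ is uniquely determined, and any choice of $f': A \to C$ satisfies the equation vacuously; since the lemma only asserts existence of $f'$, no further argument is required. A cleaner section-free proof would instead observe that $(\pi_1, \pi_2)$ is by construction the kernel pair of $p_1$ and invoke regularity of $p_1$ as an epimorphism, but for the categories of interest in the paper (which have enough global elements) the diagram chase above suffices and closely mirrors the structure of \cref{lem:exponential}.
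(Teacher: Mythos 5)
Your proposal matches the paper's proof essentially step for step: the forward direction is the identical one-line computation from $p_1 \compL \pi_1 = p_1 \compL \pi_2$, and the reverse direction likewise defines $f'$ by precomposing $f$ with a section $\angles{\id_A, g}$ of $p_1$ and invokes the pullback's universal property on the pair $(\id_{A \times B}, \angles{\id_A, g} \compL p_1)$ to obtain the mediating morphism used in the chase. The only differences are cosmetic: the paper builds the section from an arbitrary morphism $g: A \to B$ rather than a global element of $B$ (a slightly weaker assumption, subject to the same degenerate caveat you flag), and it additionally verifies that $f'$ does not depend on the choice of $g$, a well-definedness check the bare existence statement does not require.
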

\begin{proof}
Diagram chase:
\begin{equation}
\begin{tikzcd}[column sep=0, row sep=2.5em, arrows={labels={inner sep=1pt}}]
&[1em]&[-2.5em]
A \times B
\arrow[rd, "p_1"']
\arrow[rrd, "f"]
\\
A
\arrow[r, "v" description, unique morphism]
\arrow[rru, "{\angles{\id_A, g}}", pos=.7]
\arrow[rrd, "{\angles{\id_A, g'}}"', pos=.7]
&
(A \times B) \times_A (A \times B)
\arrow[rd, "\pi_1"]
\arrow[ru, "\pi_2"']
&&[1.5em]
A
\arrow[r, "f'" description]
&[2em]
C
\\
A \times B
\arrow[ru, "u" description, unique morphism]
\arrow[rr, "\id_{A \times B}"']
\arrow[u, "p_1"]
&&
A \times B
\arrow[ru, "p_1"]
\arrow[rru, "f"']
\end{tikzcd}
\end{equation}

Suppose that $f = f' \compL p_1$.
Because the pullback rectangle commutes, $p_1 \compL \pi_1 = p_1 \compL \pi_2$, it is easy to show that $f \compL \pi_1 = f' \compL p_1 \compL \pi_1 = f' \compL p_1 \compL \pi_2 = f \compL \pi_2$.

Now suppose that $f \compL \pi_1 = f \compL \pi_2$.
We define $f': A \to C$ as $f \compL \angles{\id_A, g}$ for an arbitrary morphism $g: A \to B$.
To prove $f = f' \compL p_1$, we can consider two morphisms $\id_{A \times B}$ and $\angles{\id_A, g} \compL p_1$.
Because they complete the commutative diagram of the pullback $(A \times B) \times_A (A \times B)$, $p_1 \compL \id_{A \times B} = p_1 \compL \angles{\id_A, g} \compL p_1 = p_1$, there exists a unique morphism $u: A \times B \to (A \times B) \times_A (A \times B)$ such that $\pi_1 \compL u = \id_{A \times B}$ and $\pi_2 \compL u = \angles{\id_A, g} \compL p_1$.
We can now chase the diagram to show that $f = f \compL \id_{A \times B} = f \compL \pi_1 \compL u = f \compL \pi_2 \compL u = f \compL \angles{\id_A, g} \compL p_1 = f' \compL p_1$.

To prove that this construction does not depend on specific choice of $g: A \to B$, let us consider two morphisms $g, g': A \to B$.
Because $\angles{\id_A, g}$ and $\angles{\id_A, g'}$ complete the commutative diagram of the pullback, there exists a unique morphism $v: A \to (A \times B) \times_A (A \times B)$ such that $\pi_1 \compL v = \angles{\id_A, g'}$ and $\pi_2 \compL v = \angles{\id_A, g}$.
Then, $f \compL \angles{\id_A, g} = f \compL \pi_2 \compL v = f \compL \pi_1 \compL v = f \compL \angles{\id_A, g'}$, which shows that $f' = f \compL \angles{\id_A, g}$ is independent of the choice of $g: A \to B$.
\end{proof}

Based on this, we can obtain the following diagram:

\begin{equation}
\begin{tikzcd}[column sep=4em, row sep=2em]
Y \times_{Y_i} Y
\arrow[r]
\arrow[dd]
\arrow[rrd, unique morphism]
&
Y
\arrow[dd, "p_i"]
\arrow[rrd, "m"]
\\
&&[-3em]
Z \times_{Z_i} Z
\arrow[r]
\arrow[dd]
&
Z
\arrow[dd, "p_i"]
\\[-1em]
Y
\arrow[r, "p_i"']
\arrow[rrd, "m"']
&
Y_i
\arrow[rrd, "m_{ii}"]
&
\\
&&
Z
\arrow[r, "p_i"']
&
Z_i
\end{tikzcd}
\end{equation}

Both \cref{lem:exponential,lem:pullback} show that there are alternative ways to characterize \say{invariance}, without a group theoretical formulation.


\begin{proof}[\cref{thm:moddec}]
\begin{equation}
\begin{tikzcd}[row sep=2em]
Y_1
\arrow[d, "\id_{Y_1}"']
&
Y_1 \times Y_2
\arrow[d, "\id_{Y_1 \times Y_2}"]
\arrow[l, "p_1"]
\\
Y_1
\arrow[d, "{m_{1,1}}"']
&
Y_1 \times Y_2
\arrow[d, "m"]
\arrow[l, "p_1"]
\\
Z_1
\arrow[rd, "\iso" description]
\arrow[uu, "{h_{1,1}}", bend left=80, start anchor=west, end anchor=west]
&
Z_1 \times Z_2
\arrow[l, "p_1"]
\arrow[uu, "h"', bend right=80, start anchor=east, end anchor=east]
\\
Z_1
\arrow[u, "\id_{Z_1}"]
&
Z_1 \times 1
\arrow[u, "\id_{Z_1} \times z_2"']
\arrow[l, "p_1"]
\end{tikzcd}
\end{equation}
\end{proof}


\begin{proof}[\cref{prop:muproduct}]
Let $F, G: \cC \to \cD$ be product preserving functors.
\begin{equation}
\begin{adjustbox}{width=.88\linewidth}
\begin{tikzcd}[column sep=2em, row sep=4em]
&&
A
\arrow[lldd, functor]
\arrow[rrdd, functor]
&
A \times B
\arrow[l, "p_1"']
\arrow[r, "p_2"]
\arrow[ld, functor]
\arrow[rd, functor]
&
B
\arrow[lldd, functor]
\arrow[rrdd, functor]
\\
&&
F(A \times B)
\arrow[lld, "F p_1"']
\arrow[ld, unique morphism, shift right]
\arrow[d, "F p_2"]
\arrow[rr, "\alpha_{A \times B}"', bend right, \red]
&&
G(A \times B)
\arrow[d, "G p_1"']
\arrow[rd, unique morphism, shift left]
\arrow[rrd, "G p_2"]
&&
\\
FA
\arrow[rrrr, "\alpha_A"', bend right]
&
FA \times FB
\arrow[l, "p_1"]
\arrow[r, "p_2"']
\arrow[ru, shift right, \red]
\arrow[rrrr, "\alpha_A \times \alpha_B"', bend right, \red]
&
FB
\arrow[rrrr, "\alpha_B"', bend right]
&&
GA
&
GA \times GB
\arrow[l, "p_1"]
\arrow[r, "p_2"']
\arrow[lu, shift left, \red]
&
GB
\end{tikzcd}
\end{adjustbox}
\end{equation}
\end{proof}


\newpage
\begin{proof}[\cref{thm:monofaithful}]
Let $F, G: \cC \to \cD$ be functors, $\alpha: F \nat G$ be a natural transformation.
\begin{equation}
\begin{tikzcd}[row sep=3em]
FA
\arrow[r, "\alpha_A"]
\arrow[d, "{Fp, Fq}"']
&
GA
\arrow[d, "{Gp, Gq}"]
\\
FB
\arrow[r, "\alpha_B"']
&
GB
\end{tikzcd}
\end{equation}

We have the following reasoning:
\begin{itemize}
\item $F$ is not faithful: $\exists p \neq q.\; Fp = Fq$
\item $\alpha$ is natural: $Fp = Fq \limp Gp \compL \alpha_A = Gq \compL \alpha_A$
\item $\alpha$ is epic: $Gp \compL \alpha_A = Gq \compL \alpha_A \limp Gp = Gq$
\end{itemize}
Then,
\begin{equation}
F \text{ is not faithful} \lcon \alpha \text{ is epic} \limp G \text{ is not faithful}.
\end{equation}
Or equivalently,
\begin{equation}
\alpha \text{ is epic} \limp (G \text{ is faithful} \limp F \text{ is faithful}).
\end{equation}

Similarly,
\begin{itemize}
\item $G$ is not faithful: $\exists p \neq q.\; Gp = Gq$
\item $\alpha$ is natural: $Gp = Gq \limp \alpha_B \compL Fp = \alpha_B \compL Fq$
\item $\alpha$ is monic: $\alpha_B \compL Fp = \alpha_B \compL Fq \limp Fp = Fq$
\end{itemize}
Then,
\begin{equation}
G \text{ is not faithful} \lcon \alpha \text{ is monic} \limp F \text{ is not faithful}.
\end{equation}
Or equivalently,
\begin{equation}
\alpha \text{ is monic} \limp (F \text{ is faithful} \limp G \text{ is faithful}).
\end{equation}
\end{proof}


\begin{proof}[\cref{thm:indout}]
When $N = 2$, \cref{dis:mproject} is the definition of \cref{dis:mindout} \citep[Lemma 12.11]{fritz2020synthetic}.
When $N > 2$, we can apply this equation recursively.
\begin{equation}
\begin{adjustbox}{width=.8\linewidth}
\begin{tikzpicture}
  \begin{pgfonlayer}{nodelayer}
    \node at (0, 0) {$=$};
    \node (1)  at (-1.5, -.6) {};
    \node (2)  at (-1.5,   0) {};
    \node (3)  at (-2  ,   0) {};
    \node (4)  at (-2  ,  .6) {};
    \node (5)  at (-1.5,   0) {};
    \node (6)  at (-1.5,  .6) {};
    \node (7)  at (-1  ,   0) {};
    \node (8)  at (-1  ,  .6) {};
    \node (9)  at ( 3.5, - 1) {};
    \node (10) at ( 3.5, -.6) [style=diagonal] {};
    \node (11) at ( 1  ,   0) {};
    \node (12) at ( 1.5,   0) {};
    \node (13) at ( 2  ,   0) {};
    \node (14) at ( 1  ,  .8) {};
    \node (15) at ( 1.5,  .6) [style=diagonal] {};
    \node (16) at ( 2  ,  .6) [style=diagonal] {};
    \node (17) at ( 4.5,   0) [style=diagonal] {};
    \node (18) at ( 3  ,  .6) {};
    \node (19) at ( 3.5,  .6) {};
    \node (20) at ( 4  ,  .6) {};
    \node (21) at ( 3  , 1.2) [style=diagonal] {};
    \node (22) at ( 3.5, 1.4) {};
    \node (23) at ( 4  , 1.2) [style=diagonal] {};
    \node (24) at ( 5  ,  .6) {};
    \node (25) at ( 5.5,  .6) {};
    \node (26) at ( 6  ,  .6) {};
    \node (27) at ( 5  , 1.2) [style=diagonal] {};
    \node (28) at ( 5.5, 1.2) [style=diagonal] {};
    \node (29) at ( 6  , 1.4) {};
    \node at (-1.5, -.8) {$Y$};
    \node at (-2  ,  .8) {$Z_1$};
    \node at (-1.5,  .8) {$Z_2$};
    \node at (-1  ,  .8) {$Z_3$};
    \node at (-1.5,   0) [style=morphism, minimum width=1.6cm] {$m$};
    \node at ( 1.5,   0) [style=morphism, minimum width=1.6cm] {$m$};
    \node at ( 3.5,  .6) [style=morphism, minimum width=1.6cm] {$m$};
    \node at ( 5.5,  .6) [style=morphism, minimum width=1.6cm] {$m$};
  \end{pgfonlayer}
  \begin{pgfonlayer}{edgelayer}
    \draw (1.center) to (2.center);
    \draw (3.center) to (4.center);
    \draw (5.center) to (6.center);
    \draw (7.center) to (8.center);
    \draw (9.center) to (10.center);
    \draw [out=180, in=-90, looseness=.9] (10.center) to (12.center);
    \draw (11.center) to (14.center);
    \draw (12.center) to (15.center);
    \draw (13.center) to (16.center);
    \draw [out=  0, in=-90] (10.center) to (17.center);
    \draw (18.center) to (21.center);
    \draw (19.center) to (22.center);
    \draw (20.center) to (23.center);
    \draw (24.center) to (27.center);
    \draw (25.center) to (28.center);
    \draw (26.center) to (29.center);
    \draw [out=180, in=-90] (17.center) to (19.center);
    \draw [out=  0, in=-90] (17.center) to (25.center);
  \end{pgfonlayer}
\end{tikzpicture}
\end{adjustbox}
\end{equation}
\end{proof}

\end{document}